\useunder{\uline}{\ul}{}
\newtheorem{example}{Example}
\newtheorem{theorem}{Theorem}
\begin{document}


\title{\textsf{SparDL}: Distributed Deep Learning Training with Efficient Sparse Communication}


\author{Minjun Zhao$^{\dagger}$, Yichen Yin$^{\dagger}$, Yuren Mao$^{\dagger}$, Qing Liu$^{\dagger}$, Lu Chen$^{\dagger}$, Yunjun Gao$^{\dagger}$\\
\normalsize $^{\dagger}$\emph{College of Computer Science, Zhejiang University, Hangzhou, China}\\
\emph{$^{\dagger}$\{minjunzhao, yichenyin, yuren.mao, qingliucs, luchen, gaoyj\}@zju.edu.cn}\\
}

\maketitle

\begin{abstract}

Top-$\bm{k}$ sparsification has recently been widely used to reduce the communication volume in distributed deep learning. However, due to the Sparse Gradient Accumulation (SGA) dilemma, the performance of top-$\bm{k}$ sparsification  still has limitations. Recently, a few methods have been put forward to handle the SGA dilemma. Regrettably, even the state-of-the-art method suffers from several drawbacks, e.g., it relies on an inefficient communication algorithm and requires extra transmission steps. Motivated by the limitations of existing methods, we propose a novel efficient sparse communication framework, called \textsf{SparDL}. Specifically, \textsf{SparDL} uses the Spar-Reduce-Scatter algorithm, which is based on an efficient Reduce-Scatter model, to handle the SGA dilemma without additional communication operations. Besides, to further reduce the latency cost and improve the efficiency of \textsf{SparDL}, we propose the Spar-All-Gather algorithm. Moreover, we propose the global residual collection algorithm to ensure fast convergence of model training. Finally, extensive experiments are conducted to validate the superiority of \textsf{SparDL}.

\begin{IEEEkeywords}
Distributed machine learning, Sparse gradients, Communication efficiency
\end{IEEEkeywords}

\end{abstract}

\vspace{-1mm}
\section{Introduction}
\label{sec:intro}
\vspace{-1mm}

\subsection{Background}\label{sec:spardl_intro}

Nowadays, due to the large data volume and complex models, training deep learning models is becoming more and more time-consuming. To this end, many distributed model training techniques have been proposed for deep learning~\cite{DBLP:conf/sigmod/Zhang0SYJM19, DBLP:conf/sigmod/JiangFY018, DBLP:conf/sigmod/MiaoNSYJM021, DBLP:conf/sigmod/FardLLDB20, DBLP:journals/pvldb/MiaoZSNYTC21, DBLP:journals/vldb/GuoZJWZCL21, DBLP:conf/icde/ZhouLLOWY21}. Among them, data-parallel synchronous mini-batch stochastic gradient descent (S-SGD) is widely used~\cite{DBLP:conf/sigmod/MiaoNSYJM021, mikami2018massively, DBLP:conf/infocom/ShiC019}. Specifically, when the S-SGD is employed for distributed model training, at the end of each iteration, workers synchronize their local gradients, commonly by All-Reduce operation~\cite{DBLP:conf/ppopp/0002H22}, and update their training models with the same global gradients~\cite{DBLP:journals/vldb/JiangFYSC20}. 
However, S-SGD involves significant data communications~\cite{DBLP:conf/icdcs/ShiWZTWHC19, DBLP:conf/sc/RenggliAAAH19}.
Top-$k$ sparsification~\cite{DBLP:conf/iclr/LinHM0D18, DBLP:conf/nips/AlistarhH0KKR18, DBLP:conf/ppopp/0002H22} is one of the ways to reduce the communication overhead, which can sparsify the local gradients to about $1\%$ density while not impairing model convergence. Regrettably, top-$k$ sparsification sometimes suffers from the sparse gradient accumulation (SGA) dilemma. 

Specifically, when efficient All-Reduce methods (e.g., Rabenseifner's All-Reduce~\cite{DBLP:journals/ijhpca/ThakurRG05}) are used for synchronizing sparsified gradients, the selected gradients undergo multiple transmission and summation steps to obtain global gradients. Each summation increases the volume of sparse gradients for the following transmission process since the sparse gradients of different workers may come from differing indexes, resulting in the SGA dilemma. The SGA dilemma will lead to a quick increase in gradients, which may degrade to dense gradients. For example,  Figures~\ref{fig:1a} and~\ref{fig:1b} show the initial 3  gradients of workers 1 and 2, respectively. After worker 2 receives worker 1's gradients and adds it to its own gradients, worker 2 gets 5 gradients, as shown in Figure~\ref{fig:1c}. As transmission steps continue, the number of gradients for workers increases. The SGA dilemma significantly degrades the communication efficiency of distributed model training with sparsification. It is necessary to design an efficient method to alleviate the SGA dilemma.

\begin{figure}[t]
\centering
\hspace{-3mm}
\subfigure[The sparse gradients from worker 1]{
\includegraphics[width=0.12\textwidth]{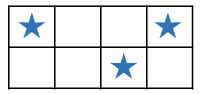}
\label{fig:1a}
}
\hspace{1.4mm}
\subfigure[The sparse gradients from worker 2]{
\includegraphics[width=0.12\textwidth]{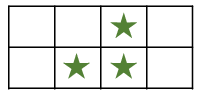}
\label{fig:1b}
}
\hspace{1.4mm}
\subfigure[The sparse gradients of worker 2 after summation]{
\includegraphics[width=0.12\textwidth]{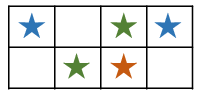}
\label{fig:1c}
}
\vspace{-3mm}
\caption{Illustration of the sparse gradient accumulation (SGA) dilemma}
\label{fig:issue}
\vspace{-6mm}
\end{figure}


\vspace{-2mm}
\subsection{State-of-the-art Methods}
\vspace{-1mm}
{

In the literature, some sparse All-Reduce methods have been proposed to handle the SGA dilemma, which is summarized in Table~\ref{tab:works} \cite{DBLP:journals/ijhpca/ThakurRG05, DBLP:conf/sc/RenggliAAAH19, DBLP:conf/icdcs/ShiWZTWHC19, DBLP:conf/ppopp/0002H22}. 
Specifically, Top$k$A~\cite{DBLP:conf/sc/RenggliAAAH19} alleviates the SGA dilemma by using the recursive doubling algorithm, which leads to the bandwidth cost proportional to the number of workers~\cite{DBLP:conf/icdcs/ShiWZTWHC19, DBLP:conf/ppopp/0002H22}. 
Top$k$DSA~\cite{DBLP:conf/sc/RenggliAAAH19} consists of Reduce-Scatter and All-Gather operations. In the Reduce-Scatter operation, Top$k$DSA addresses the SGA dilemma by directly sending gradients, which also causes high latency cost. In the All-Gather operation, Top$k$DSA allows the SGA dilemma to happen and switches to dense transmission when the sparse gradient is large enough, leading to high bandwidth cost.
gTop$k$~\cite{DBLP:conf/icdcs/ShiWZTWHC19, DBLP:conf/ijcai/ShiZWTC19} utilizes reduction tree and broadcast tree models, both of which have high bandwidth cost, to solve the SGA dilemma. 

O$k$-Top$k$~\cite{DBLP:conf/ppopp/0002H22} is the state-of-the-art method among four methods, which improves Top$k$DSA with two compression steps. However, O$k$-Top$k$ also suffers from some limitations.
First, O$k$-Top$k$ directly sends gradients to the target workers in the Reduce-Scatter operation, which has high latency cost.
Second, in the All-Gather operation, O$k$-Top$k$ employs several extra communication operations to balance the uneven distribution of sparse gradient among workers, causing high latency cost and large upper bound of bandwidth cost for the All-Gather operation.
Third, as shown in Table~\ref{tab:works}, the upper bound of O$k$-Top$k$'s bandwidth cost is $6\tfrac{P-1}{P}k\beta$. However, in reality, the bandwidth cost of O$k$-Top$k$ may be higher than $6\tfrac{P-1}{P}k\beta$. The reason is two-folded. (i) The local sparse gradients from each worker are balanced every 64 iterations, which causes uneven distribution before the next balancing and leads to different communication volumes for different workers in the Reduce-Scatter operation. (ii) O$k$-Top$k$ utilizes threshold pruning, which may select more than $k$ sparse gradients.

Therefore, the performance of sparse All-Reduce still has room for improvement.

}


\begin{table}[t]
\small
\caption{Communication complexity of sparse All-Reduce methods ($P$ is the number of workers, $n$ is the number of dense gradients, $k$ is the number of sparse gradients ($k \ll n$) and $d$ is the number of teams in \textsf{SparDL})}
\vspace{-2mm}
\centering
\begin{tabular}{|l|l|l|}
\hline
\textbf{Algorithms} & \textbf{Latency Cost} & \textbf{Bandwidth Cost}\\ \hline
Top$k$A~\cite{DBLP:conf/sc/RenggliAAAH19} & $\log P\alpha$ & $2(P-1)k\beta$\\ \hline
Top$k$DSA~\cite{DBLP:conf/sc/RenggliAAAH19} & \begin{tabular}[c]{@{}l@{}}$(P+2\log P) \alpha$\end{tabular} & \begin{tabular}[c]{@{}l@{}}$[4\tfrac{P-1}{P}k\beta, \tfrac{P-1}{P}(2k+n)\beta]$\end{tabular}\\ \hline
gTop$k$~\cite{DBLP:conf/icdcs/ShiWZTWHC19} & $2\log P\alpha$ & $4\log Pk\beta$\\ \hline
O$k$-Top$k$~\cite{DBLP:conf/ppopp/0002H22} & \begin{tabular}[c]{@{}l@{}}$2(P+\log P) \alpha$\end{tabular} & \begin{tabular}[c]{@{}l@{}}$[2\tfrac{P-1}{P}k\beta, 6\tfrac{P-1}{P}k\beta]$\end{tabular}\\ \hline
\begin{tabular}[c]{@{}l@{}}\textbf{\textsf{SparDL}}\end{tabular} & $2\log P\alpha$ & $4\tfrac{P-1}{P}k\beta$\\ \hline
\begin{tabular}[c]{@{}l@{}}\textbf{\textsf{SparDL}}\\ \textbf{(R-SAG)}\end{tabular} & \begin{tabular}[c]{@{}l@{}}$(2\log\tfrac{P}{d}$\\ $+\log d)\alpha$\end{tabular} & \begin{tabular}[c]{@{}l@{}}$2(\tfrac{2P-2d}{P}+\tfrac{d}{P}\log d)k\beta$\end{tabular} \\ \hline
\begin{tabular}[c]{@{}l@{}}\textbf{\textsf{SparDL}}\\ \textbf{(B-SAG)}\end{tabular} & \begin{tabular}[c]{@{}l@{}}$(2\log\tfrac{P}{d}$\\ $+\log_2 d)\alpha$\end{tabular} & \begin{tabular}[c]{@{}l@{}}$[2\tfrac{d^2+P-2d}{Pd}k\beta$,\\$2\tfrac{d^2+2P-3d}{P}k\beta]$\end{tabular}\\ \hline
\end{tabular}
\label{tab:works}
\vspace{-5mm}
\end{table}


\vspace{-2mm}
\subsection{Our Solutions}
\vspace{-1mm}
Motivated by the limitations of existing methods, in this paper, we propose a novel sparse communication framework, called \textsf{SparDL}, for distributed deep learning training. \textsf{SparDL} incorperates three novel algorithms, i.e., \textit{Spar-Reduce-Scatter}, \textit{global residual collection}, and \textit{Spar-All-Gather}. To be specific, first, \textsf{SparDL} evenly divides all workers into $d$ teams. Second, \textsf{SparDL} utilizes \textit{Spar-Reduce-Scatter} to Reduce-Scatter sparse gradients for each team. Then, \textsf{SparDL} uses \textit{Spar-All-Gather} to synchronize gradients among $d$ teams. Finally, \textsf{SparDL} adopts Bruck All-Gather~\cite{DBLP:journals/tpds/BruckHKUW97} to synchronize gradients for each team. In both \textit{Spar-Reduce-Scatter} and \textit{Spar-All-Gather},  \textsf{SparDL} employs \textit{global residual collection} to collect the discarded gradients. The details of \textit{Spar-Reduce-Scatter}, \textit{global residual collection}, and \textit{Spar-All-Gather} are as follows.

\begin{itemize}
[leftmargin=*]
\vspace{-0.5mm}
\item \textit{Spar-Reduce-Scatter} is proposed to efficiently Reduce-Scatter sparse gradients for each team. To this end, we devise a non-recursive Reduce-Scatter structure for \textit{Spar-Reduce-Scatter} to ensure the efficiency of \textsf{SparDL} even for the arbitrary number of workers. Thus, \textit{Spar-Reduce-Scatter} can partition the gradients into blocks and apply block top-$k$ selection on every gradient block before each transmission step to maintain the volume of sparse gradients. With the help of \textit{Spar-Reduce-Scatter}, \textsf{SparDL} does not require extra transmission steps to balance the gradients among workers and avoids resorting to unstable threshold pruning to achieve global compression.
\item \textit{Global residual collection} is devised to ensure fast convergence of the training process. For maintaining the volume of sparse gradients, \textsf{SparDL} utilizes multiple top-$k$ selections, which may discard important gradients and result in slower convergence rate. To keep the fast convergence rate of the training process, we propose the \textit{global residual collection} algorithm to collect and reuse all the discarded gradients.
\item \textit{Spar-All-Gather} is based on the divide-and-conquer strategy and is proposed to reduce latency cost and improve the efficiency of \textsf{SparDL} further. Specifically, \textit{Spar-All-Gather} divides workers into $d$ teams. By adjusting the number of teams, we can adjust the ratio of latency and bandwidth in the total communication complexity. For different situations, we use two versions of \textit{Spar-All-Gather}, i.e., R-SAG and B-SAG. R-SAG is used when the number of teams is $2^i$ ($i \in \mathbf{N^*}$). Otherwise, we employ B-SAG.

\end{itemize}

{
 \noindent \textbf{Contributions.} This paper makes the following contributions. 
\begin{itemize}
[leftmargin=*]
\item We present a novel sparse communication framework, called \textsf{SparDL}, to handle the SGA dilemma for efficient distributed deep learning. 
\item We propose three novel algorithms, i.e, \textit{Spar-Reduce-Scatter}, \textit{global residual collection}, and \textit{Spar-All-Gather}, and integrate these three algorithms seamlessly into \textsf{SparDL}. 
\item We conduct extensive experiments on seven different distributed deep learning cases to verify the effectiveness and efficiency of \textsf{SparDL}. 
Experimental results show that \textsf{SparDL} is up to 4.9$\times$ faster than the state-of-the-art methods, while maintaining comparable effectiveness.
\end{itemize}}

\noindent
\textbf{Outline.}
{The rest of this paper is organized as follows.
Section~\ref{sec:problem} introduces preliminaries.
Section~\ref{sec:method} presents the framework of \textsf{SparDL}, and the details of \textit{Spar-Reduce-Scatter}, \textit{global residual collection}, and \textit{Spar-All-Gather}.
Section~\ref{sec:exe} reports experimental results.
Section~\ref{sec:related} reviews the related work.
Section~\ref{sec:lim} discuss the limitations and practical implications of \textsf{SparDL}.
Finally, we conclude the paper in Section~\ref{sec:conclusion}. }

\vspace{-1mm}
\section{Preliminaries}
\label{sec:problem}
\vspace{-1mm}


\noindent\textbf{Cost model for communication complexity.}
The aforementioned latency ($\alpha$)-bandwidth ($\beta$) cost model ($\alpha$-$\beta$ model)~\cite{DBLP:conf/imw/SarvothamRB01, DBLP:journals/pc/Hockney94} is widely used to analyze the communication complexity~\cite{DBLP:conf/ipps/Pjesivac-GrbovicABFGD05, DBLP:conf/ppopp/0002H22}. In this cost model, $\alpha$ is the latency cost unit of a transmission between two workers. $\beta$ is the bandwidth cost unit.
All communication cost can be modeled by $x\alpha+y\beta$ where $x$ can be counted as the number of transmission rounds and $y$ can be counted as the total volume of data received by a worker during the communication process~\cite{DBLP:journals/concurrency/ChanHPG07}. 


\vspace{0.025in}
\noindent\textbf{All-Reduce, Reduce-Scatter and All-Gather.}
All-Reduce, Reduce-Scatter and All-Gather are common communication operations. 
Figure~\ref{fig:allreduce} shows the results after these operations. 
All-Reduce operation is commonly used to synchronize and sum gradients from all workers~\cite{DBLP:conf/sc/SensiGA0H21, DBLP:conf/icdm/HakimiALS21, DBLP:journals/concurrency/ChanHPG07, DBLP:conf/hpca/DongCZYWFZLSPGJ20, DBLP:conf/icpp/ChuLASHEP17, DBLP:journals/pvldb/LiZVSNLPSVDC20}.
Besides, using Reduce-Scatter and All-Gather successively can achieve the same results as All-Reduce. Reduce-Scatter reduces each part of gradients into its corresponding worker. And All-Gather gathers all parts of gradients from all workers in each worker. There are two popular, efficient All-Gather algorithms called recursive doubling~\cite{DBLP:journals/ijhpca/ThakurRG05, DBLP:journals/concurrency/ChanHPG07} and Bruck All-Gather~\cite{DBLP:journals/tpds/BruckHKUW97}, as shown in Figure~\ref{fig:reduceallgather}. 
The recursive doubling algorithm is efficient when the number of workers is a power of 2, but cannot be used directly when the number is not a power of 2~\cite{DBLP:journals/ijhpca/ThakurRG05, DBLP:journals/concurrency/ChanHPG07}. In contrast, Bruck All-Gather is efficient and reach the lower bound of bandwidth cost at any number of workers~\cite{DBLP:journals/ijhpca/ThakurRG05}.
For recursive doubling, each worker communicates and exchanges all data with a worker at a distance of $2^t$ at the $t$-th step. And for Bruck All-Gather, each worker sends data to a worker at a distance of $2^t$ on one side and receives data from a worker at a distance of $2^t$ on the other side at the $t$-th step. The communication complexity of recursive doubling and Bruck All-Gather, when the number of workers is a power of 2, is
\vspace{-2mm}
\begin{equation}
\vspace{-2mm}
T_{All-Gather} = \log_2{P}\alpha + n\tfrac{P-1}{P}\beta.
\end{equation}


\begin{figure}[t]
\centering
\includegraphics[width=0.48\textwidth]{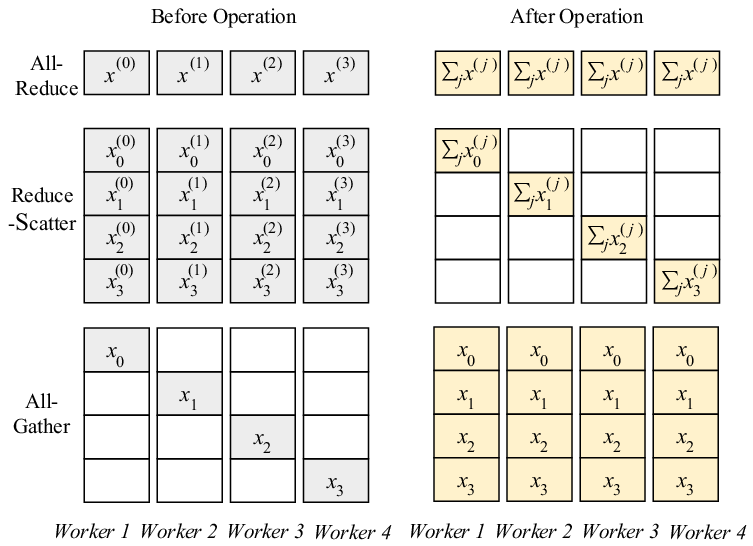}
\vspace{-3mm}
\caption{Illustration of All-Reduce, Reduce-Scatter and All-Gather operation}
\vspace{-6mm}
\label{fig:allreduce}
\end{figure}

\vspace{0.025in}
\noindent\textbf{Top-$\bm{k}$ sparsification.}
Top-$k$ sparsification is usually used in efficient distributed deep learning for sparsifying gradients~\cite{DBLP:conf/iclr/LinHM0D18, DBLP:conf/nips/AlistarhH0KKR18, DBLP:conf/ppopp/0002H22}. It selects $k$ gradients with the largest absolute values from dense gradients where $k$ is usually set to a certain proportion multiplied by the number of dense gradients.
{Top-$k$ sparsification does not seriously affect learning performance, which has been proved by many previous works theoretically~\cite{DBLP:conf/nips/AlistarhH0KKR18, DBLP:conf/nips/StichCJ18, DBLP:conf/ijcai/ShiZWTC19} and empirically~\cite{DBLP:conf/emnlp/AjiH17, DBLP:conf/iclr/LinHM0D18, DBLP:conf/ppopp/0002H22, DBLP:conf/sc/RenggliAAAH19}.
}

\begin{figure}[tb]
	\centering
	\subfigure[{Recursive doubling for All-Gather}]{
    	
    \centering
    \hspace{-5mm}
	\includegraphics[width=0.5\textwidth]{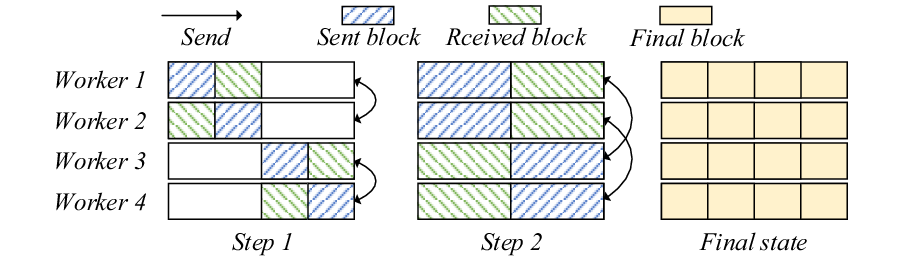}}
 
    \vspace{-4mm}
	\subfigure[{Bruck All-Gather for All-Gather}]{
    \hspace{-5mm}
	\includegraphics[width=0.5\textwidth]{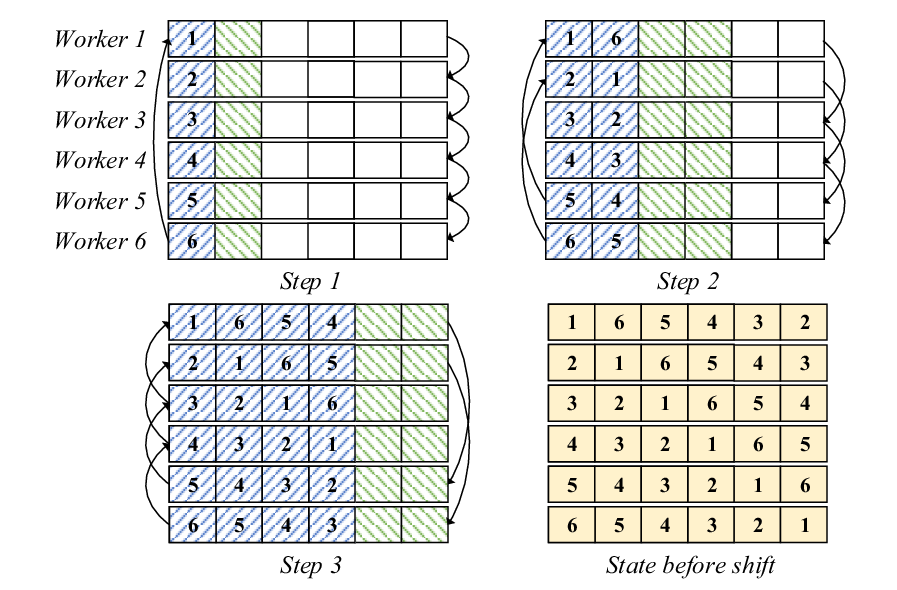}}
	\vspace{-5mm}
	\caption{Illustration of recursive doubling and Bruck All-Gather}
	\vspace{-6mm}
	\label{fig:reduceallgather}
\end{figure}
\vspace{-1mm}
\section{The Proposed \textsf{SparDL}}
\label{sec:method}

\begin{figure*}[t]
\centering
\includegraphics[width=1.0\textwidth]{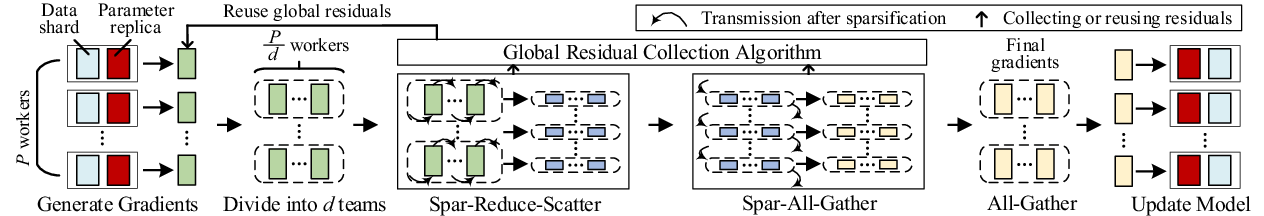}
\vspace{-7mm}
\caption{An overview of \textsf{SparDL} framework}
\label{fig:framework}
\vspace{-5mm}
\end{figure*}

{In this section, we describe the proposed \textsf{SparDL}, an efficient distributed deep learning framework that solves the SGA dilemma for sparse gradients.
\textsf{SparDL} is composed of three proposed algorithms: \textit{Spar-Reduce-Scatter}, \textit{global residual collection} and \textit{Spar-All-Gather}. 
In the following sections, we first overview the framework of \textsf{SparDL}, and then provide an in-depth introduction to these algorithms.}

\subsection{Overview}
\vspace{-1mm}
The overview of the \textsf{SparDL} framework is shown in Fig.~\ref{fig:framework}. Specifically, for a cluster with $P$ workers, we first generate gradients at each worker from forward propagation and backward propagation in parallel. Next, we plus the locally stored residuals from the global residual collection algorithm to the local gradients. Then we divide all $P$ workers into $d$ teams equally. We use the \textit{Spar-Reduce-Scatter} algorithm, \textit{Spar-All-Gather} algorithm and an All-Gather operation successively to get the final gradients after synchronization. If the user sets $d=1$, only \textit{Spar-Reduce-Scatter} and an All-Gather operation are used.
Meanwhile, the \textit{global residual collection} algorithm collects discard gradients as residuals during the \textit{Spar-Reduce-Scatter} and \textit{Spar-All-Gather} algorithms at each worker. At last, we update the model replica with the final gradients at each worker.
Besides, unless noted otherwise, \textsf{SparDL} denotes \textsf{SparDL} with $d=1$ and thereby only utilizes \textit{Spar-Reduce-Scatter} and an All-Gather operation.

\vspace{-2mm}
\subsection{Spar-Reduce-Scatter Algorithm}
\vspace{-1mm}
{
To address the SGA dilemma with low communication complexity, we propose \textit{Spar-Reduce-Scatter} (SRS) algorithm, which combines the Reduce-Scatter phase with multiple block-wise sparsification processes for the first time. Moreover, we propose a non-recursive structure for Reduce-Scatter to ensure low communication complexity and efficiently work on any number of workers without additional operations. A worker's source and target workers differ at one transmission step in this structure. A six-worker example is provided to illustrate SRS, as shown in Figure~\ref{fig:Spar-Reduce-Scatter}.}
Besides, to clarify the SRS algorithm, we set $d=1$ and separate SRS into two processes: partitioning and transmission with sparsification.

\vspace{0.025in}
\noindent\textbf{1) Partitioning.} For worker $r^w$, which is the $w$-th worker of all $P$ workers, given gradients $G^{(w, t)}$ from the latest iteration $t$ after training one batch (forward propagation and backward propagation) and residuals $\xi^{t-1}_w$ from last iteration $t-1$.
First, worker $r^w$ sums up $G^{(w, t)}$ and $\xi^{t-1}_w$ as new $G^{(w, t)}$. Then it partitions $G^{(w, t)}$ into $P$ blocks $\{{G^{(w, t)}_0} , \ldots, {G^{(w, t)}_P}\}$ and sparsifies its local dense gradients by selecting top-$\tfrac{k}{P}$ values of each block. Then we partition $P$ blocks into one preservation bag $B_0$ and $l = \lceil \log_2P \rceil$ sending bags $\{B_{1}^w, \ldots, B_{l}^w\}$ in sequence. These sending bags will be used as sending units in the following transmission steps, and we will preserve the preservation bag. Specifically, we start from the block ${G^{(w, t)}_w}$ and put it into the preservation bag $B_0^w$. Then we put the next $2^0$ blocks from $w+2^0$ to $w+2^1-1$ into sending bag $B_1^w$, and the following $2^1$ blocks from $w+2^1$ to $w+2^2-1$ into $B_2^w$. Repeat this process until all the blocks are put into different bags. Note that, in the partitioning process, $P$ blocks are considered to line up in a circle, i.e., if reaching block $P$ in the partitioning process, continue putting blocks from block $1$. It is possible that the remaining blocks are not enough to fill the last sending bag $B_{l}$, and the remaining number will be $E = P-2^{\lceil \log_2 P \rceil -1}$.

\begin{example}
\vspace{-2mm}
Figure~\ref{fig:Spar-Reduce-Scatter} presents the partitioning process. Suppose we have 6 workers. Take worker 1 for example, we put its first block $\{1\}$ into preservation bag $B_0^1$, and $l = \lceil \log_26 \rceil = 3$, its next $2^0$ block $\{2\}$ into sending bag $B_1^1$ and the next $2^1$ blocks $\{3, 4\}$ into sending bag $B_2^1$. The last sending bag $B_3^1$ should be put $2^2$ blocks, but worker 1 has only $E=6-2^{\lceil \log_26 \rceil -1}=2$ blocks $\{5, 6\}$ left. Therefore, we put block $\{5, 6\}$ into sending bag $B_3^1$, which is not full.
\end{example}

\vspace{-2mm}
\vspace{0.025in}
\noindent\textbf{2) Transmission with sparsification.} 
After partitioning blocks into bags, we transmit all sending bags of blocks from the last bag $B_{l}^w$ to $B_1^w$ in sequence. 
Specifically, for worker $r^w$ at each step $i$, it regards worker $r^{w+2^{l-i}}$ as target and regards worker $r^{w-2^{l-i}}$ as source. It sends bag $B_{l-i+1}^w$ to worker $r^{w+2^{l-i}}$ and receives bag $B_{l-i+1}^{w-2^{l-i}}$ from worker $r^{w-2^{l-i}}$. 
After receiving a sending bag from the source, each worker adds the sparse gradients from the blocks in the received sending bag to the gradients in the locally remaining blocks according to the gradient indexes.
Then we utilize block-wise sparsification in each worker, i.e., selecting top-$\tfrac{k}{P}$ values in each locally remaining block after the summation and removing the unselected gradients.
Through this sparsification in each step, the number of gradients of each block in sending bag is maintained as $\tfrac{k}{P}$, which solves the SGA dilemma. At each step, the received sending bag is always a subset of the held blocks, i.e., blocks that have not been sent, according to Theorem~\ref{the:receive}. Thus, every worker's number of held blocks decreases at every step. After $l$ steps, all sending bags have been sent. And each worker only holds one block with a different rank related to the worker's rank at the preservation bag, e.g., $r^w$ holds block ${G^{(w, t)}_w}$, which is in line with the purpose of Reduce-Scatter.
Besides, this process works in a non-recursive way, where each worker's source and target can be different at each transmission step.
Thus, the SRS algorithm is available on any number of workers directly, without additional operations.

\begin{figure*}[t]
	\centering
	\includegraphics[width=0.95\textwidth]{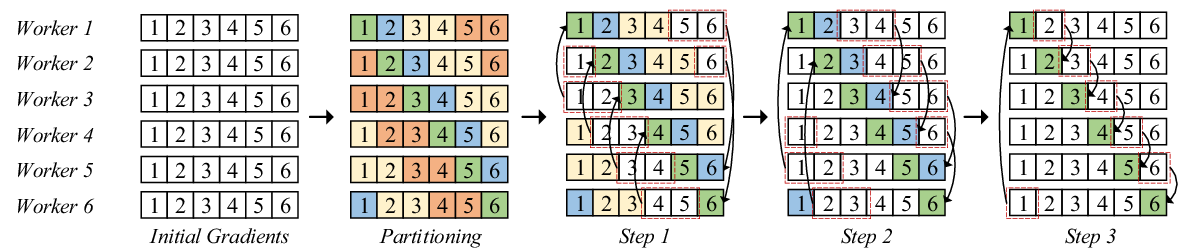}
	\vspace{-3mm}
	\caption{Spar-Reduce-Scatter Algorithm. The number in the block represents the position of the block. Each block contains part of gradients (dense or sparse) of its corresponding position.}
	\label{fig:Spar-Reduce-Scatter}
 	\vspace{-6mm}
\end{figure*} 

\begin{example}
\vspace{-2mm}
Figure~\ref{fig:Spar-Reduce-Scatter} also presents the transmission and sparsification process. Take worker 1 for example, at step 1, the communication distance is $2^{3-1} = 4$. Thus, it sends bag $B_3$ with 2 blocks to worker 5 and receives a bag with 2 blocks from worker 2. At step 2, the communication distance is $2^{3-2} = 2$. Thus, worker 1 sends bag $B_2$ with 2 blocks to worker $3$ and receives a bag with 2 blocks from worker $5$. At the last step, worker 1 sends bag $B_1$ with 1 block to the worker $2$ and receives a bag with 1 block from the worker $6$.
\end{example}

\vspace{-4mm}
\begin{theorem}
At each step $i$ in transmission, the ranks of blocks in sending bag from the $w$-th worker are a subset of those of the blocks held by the $w+2^{l-i}$-th worker.
\label{the:receive}
\end{theorem}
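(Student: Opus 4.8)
The plan is to prove the statement by a single induction on the transmission step $i$ that simultaneously tracks which block positions each worker still holds. Throughout, I index the $P$ block positions on a circle and read every index modulo $P$. With $l=\lceil\log_2 P\rceil$, the dividing rule gives $B_0^w=\{w\}$ and, for $1\le j\le l$, $B_j^w=\{w+2^{j-1},\dots,w+2^{j}-1\}$ (the last bag truncated to the $E=P-2^{l-1}$ remaining positions). The single observation that drives everything is that a union of consecutive bags is a contiguous arc: $B_0^w\cup\cdots\cup B_j^w=\{w,w+1,\dots,w+2^{j}-1\}$, an arc of length $2^{j}$ for $j<l$ and the whole circle for $j=l$. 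Recall also that at step $i$ worker $w$ sends $B_{l-i+1}^w$ to worker $w+2^{l-i}$ and receives $B_{l-i+1}^{\,w-2^{l-i}}$ from worker $w-2^{l-i}$.

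The invariant I would prove jointly with the theorem is: at the \emph{start} of step $i$, worker $w$ holds exactly the positions $B_0^w\cup\cdots\cup B_{l-i+1}^w$, i.e.\ the contiguous arc $A_w^{(i)}=\{w,\dots,w+2^{\,l-i+1}-1\}$ (the full circle when $i=1$). The theorem and this invariant are mutually dependent, and that is the heart of the argument: the invariant pins down the exact shape of the held set so that I can compare it with the bag being sent, while the theorem is precisely what guarantees that the positions a worker \emph{receives} are already among those it holds, so that the index-wise summation only augments existing blocks and the held set shrinks by sending rather than growing by receiving. Hence the two must be carried together through the induction.

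For the base case $i=1$ every worker holds all $P$ positions, so any received bag is trivially contained in the held set and the theorem holds; note this is also the only step at which the truncated last bag $B_l$ is transmitted, and it causes no trouble precisely because the receiver holds everything. For the inductive step, write $m=l-i+1$ and assume the invariant at step $i$. The bag worker $w$ sends is $B_m^w=\{w+2^{m-1},\dots,w+2^{m}-1\}$, the upper half of its arc $A_w^{(i)}$ of length $2^m$. The designated receiver is $w+2^{m-1}$, whose held set is the arc $A_{w+2^{m-1}}^{(i)}=\{w+2^{m-1},\dots,w+2^{m-1}+2^{m}-1\}$. Since both sets begin at the same index $w+2^{m-1}$ and the sent bag has length $2^{m-1}\le 2^{m}$, the inclusion $B_m^w\subseteq A_{w+2^{m-1}}^{(i)}$ is immediate, which is exactly the theorem at step $i$. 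Applying this inclusion with sender $w-2^{m-1}$ and receiver $w$ shows the bag $w$ receives lies inside $A_w^{(i)}$; therefore after receiving and then sending $B_m^w$, the positions held by $w$ are $A_w^{(i)}\setminus B_m^w=\{w,\dots,w+2^{m-1}-1\}=A_w^{(i+1)}$, re-establishing the invariant at step $i+1$ and closing the induction.

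The only real obstacle is bookkeeping rather than depth: one must verify the modular arc arithmetic carefully and confirm that the single non-full bag (the last one, sent at step $1$) does not break the ``contiguous arc'' invariant --- which it does not, since after step $1$ the held set collapses to the clean length-$2^{l-1}$ arc regardless of how $B_l$ was truncated. Everything downstream then reduces to the one-line length comparison $2^{m-1}\le 2^{m}$. A quick check against the six-worker instance in Figure~\ref{fig:Spar-Reduce-Scatter} (e.g.\ worker $1$ sending $\{3,4\}$ to worker $3$, whose step-$2$ arc is $\{3,4,5,6\}$) confirms the indexing.
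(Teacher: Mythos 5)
Your proof is correct and takes essentially the same route as the paper's: both identify the receiver's held blocks as a contiguous arc beginning at its own rank, note that the sent bag starts at that same index, and finish with a length comparison, special-casing the truncated last bag at step $1$ via $E \le P-E$. The only difference is that you make explicit, by induction, the arc invariant describing each worker's held blocks, which the paper's proof simply asserts.
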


\vspace{-2mm}
\begin{proof}
\vspace{-2mm}
The left side of sending blocks from $w$-th worker at step $i$ is $w+2^{l-i}$. Besides, the blocks held by the $w+2^{l-i}$-th worker is from $w+2^{l-i}$ to $w+2^{l-i}+2^{l-i}-1$. Thus, at step $i$, the left side of sending bag is also the left side of reserving bag. Since the size of the last sending bag is $E = P-2^{\lceil \log_2 P \rceil -1}$ and $E<P-E$, the right side of the last sending bag will be in the remaining blocks of the $w+2^{l-i}$-th worker. Thus the Theorem holds at step 1. Since the range of other sending blocks from $w$-th worker at step $i$ is from $w+2^{l-i}$ to $w+2^{l-i+1}-1$ and the $w+2^{l-i}$-th worker holds block from $w+2^{l-i}$ to $w+2^{l-i}+2^{l-i}-1$ at step $i$, the theorem holds at step $i, i \neq 1$. Therefore, the Theorem holds at each step.
\vspace{-2mm}
\end{proof}

After SRS, each worker holds a block of $\tfrac{k}{P}$ sparse gradients at a different position. At last, we use the All-Gather operation on the block of each worker, which sends the local data of the worker to all other workers. We choose the Bruck algorithm~\cite{DBLP:journals/tpds/BruckHKUW97} for the All-Gather operation (mentioned in Section 2 and shown in Figure~\ref{fig:reduceallgather}) since it also efficiently works on any number of workers without additional operations. After Bruck All-Gather, all workers have the same global gradients, which means synchronization is complete.
Therefore, \textsf{SparDL} is efficient for every number of workers because both SRS and Bruck algorithms are efficient for every number of workers.

\vspace{0.025in}
\noindent\textbf{Communication Complexity Analysis.} 
In the transmission and sparsification process, each worker takes $l = \lceil \log_2P \rceil$ rounds of communication in parallel. Thus, the latency of SRS is $\lceil \log_2P \rceil\alpha$. During the transmission process, each worker sends $P-1$ blocks in sending bags. Since sparse gradients should be stored with indices and values with the commonly used form, i.e., coordinate (COO) format, the transmission volume of each block in sending bag is $\tfrac{2k}{P}$. Hence, the bandwidth of transmission in SRS is $2k\tfrac{P-1}{P}\beta$. The communication complexity of SRS is 
\vspace{-2mm}
\begin{equation}
\vspace{-2mm}
T_{1} = \lceil \log_2P \rceil\alpha + 2k\tfrac{P-1}{P}\beta. 
\end{equation}
The volume of sparse gradients is $2\tfrac{k}{P}$ (indexes and values) in each worker. Thus, the communication complexity of Bruck algorithm is 
\vspace{-2mm}
\begin{equation}
\vspace{-2mm}
T_{2} = \lceil \log_2P \rceil\alpha + 2k\tfrac{P-1}{P}\beta. 
\end{equation}
Therefore, the total communication complexity of \textsf{SparDL} is
\vspace{-2mm}
\begin{equation}
\vspace{-2mm}
T_{all} = T_1+T_2=2\lceil \log_2P \rceil\alpha + 4k\tfrac{P-1}{P}\beta.
\end{equation}

\vspace{0.025in}
\noindent\textbf{Optimization for SRS.} 
After observation, We find it unnecessary to sparsify the summed blocks immediately after receiving a sending bag. Reducing unnecessary top-$k$ selection times can reduce the time cost per iteration and accelerate convergence. To solve the SGA dilemma, we only need to ensure that gradients are sparse enough in the next sending bag before the next transmission step. Consequently, we modified the sparsification timing: instead of performing sparsification on blocks after summation, each worker now only sparsifies the blocks destined for the next transmission step after adding the received gradients.




\begin{figure}[t]
\centering
\vspace{-2mm}
\includegraphics[width=0.48\textwidth]{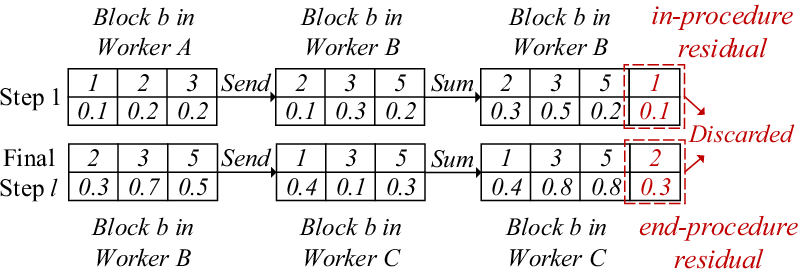}
\vspace{-3mm}
\caption{Illustration of the difference between in-procedure residual and end-procedure residual}
\vspace{-7mm}
\label{fig:res-illustration}
\end{figure}

\vspace{-1mm}
\subsection{Global Residual Collection Algorithm}
\vspace{-1mm}
{\textsf{SparDL} uses multiple top-$k$ selections between transmission steps to solve the SGA dilemma. However, multiple selections discard many gradients, which may contain important gradients. Losing these crucial gradients may lead to slower convergence, e.g., lower accuracy at the same iterations. Therefore, we propose the global residual collection algorithm to store all discarded gradients and ensure fast convergence.}

{Existing compensation methods~\cite{DBLP:conf/ppopp/0002H22, DBLP:conf/icdcs/ShiWZTWHC19, DBLP:conf/iclr/LinHM0D18} collect the discarded gradients from sparsification as residuals and plus them to new gradients at the next iteration. 
There are three types of discarded gradients in \textsf{SparDL}. We call them local residual, end-procedure residual, and in-procedure residual. The local residuals are gradients that are sparsified and discarded at each worker locally before transmission. As for end-procedure residuals and in-procedure residuals, they are both discarded gradients between transmission steps.  
The difference between them is that the gradients corresponding to the end-procedure residuals are completely discarded in the communication process, while the gradients corresponding to the in-procedure residuals are not.
In other words, the indexes of end-procedure residuals do not appear in the final global gradients and the indexes of in-procedure residuals still exist in the final global gradients.
To illustrate the difference between in-procedure residual and end-procedure residual, we consider the example in Figure~\ref{fig:res-illustration}. From this figure, the discarded gradient with index 2 is end-procedure residual and the discarded gradient with index 1 is in-procedure residual. 
The reason is that the gradient with index 2 is completely discarded, and none of the gradient value under this index is shown in the final global gradients; On the contrary, index 1 still exists in the final global gradients, which indicate that there are some gradients with index 1 are selected in the communication process.
The existing compensation methods only collect local residuals~\cite{DBLP:conf/iclr/LinHM0D18} or local and end-procedure residuals~\cite{DBLP:conf/ppopp/0002H22, DBLP:conf/icdcs/ShiWZTWHC19} but cannot collect in-procedure residuals.
Nonetheless, \textsf{SparDL} generates plenty of in-procedure residuals on each worker due to its multiple sparsification processes. As a result, existing methods are unsuitable for \textsf{SparDL}. To address this issue, we propose the global residual collection algorithm, which encompasses the collection of all three types of residuals.}


{
Since the indexes of in-procedure residuals still exist in the final gradients like normal global gradients, it is impossible to collect them only according to the final indexes. Therefore, we collect these residuals throughout the communication process.
As presented in Algorithm~\ref{alg:res}, we collect the residuals in two steps. Since existing methods tend to store the residuals on the worker who generated them, we also store the local and end-procedure residuals on the local worker (lines 13-14). 
But the in-procedure residuals may come from multiple workers, and collecting these residuals on the generated workers requires high communication cost, severely weakening the meaning of sparse communication. For this case, we save this part of residuals directly on the workers who perform the sparsification (line 8). Although these residuals are collected on the processing worker, from the perspective of the entire training cluster, the discarded gradients are still collected and will be used sooner or later.}

\setlength{\textfloatsep}{0pt}
\begin{algorithm}[t]
    \small
    \LinesNumbered
    \caption{\textsf{SparDL} with Global Residual Collection Algorithm}
    \label{alg:res}
    \KwIn{Local gradients $G^{(w, t)}$ at worker $r^w$ and iteration $t$, residuals $\xi^{t-1}_w$ from iteration $t-1$ at worker $r^w$, the number of workers $P$}
    \tcp{\textbf{Worker} $r^w, w \leftarrow 1 ,\ldots, N$ in parallel}
    ${G^{(w, t)}} \leftarrow G^{(w, t)} + \xi^{t-1}_w$.\\
    Partition ${G^{(w, t)}}$ into $P$ local blocks ${G^{(w, t)}} \leftarrow \{{G^{(w, t)}_0} , \ldots, {G^{(w, t)}_P}\}$.\\
    Copy $G^{(w, t)}$ to $G^{(w, t)}_{copy}$.\\
    Partitions blocks of gradients into different bags.
    \For{$step \leftarrow 1$ to $l$}{
        \For{each block $G^{(w, t)}_i$ that will be transmitted next}
    	{
        Sparsify $G^{(w, t)}_i$ by selecting top-$k$ gradients $g^k_i$.\\
        $\xi^{(w, t)}_i \leftarrow G^{(w, t)}_i - {g^k_i}$.\\
        }
        Transmission and summation.\\
    }
    Sparsify the only reserved block $G^{(w, t)}_w$ in worker $r^w$.\\
    All-Gather all blocks from all workers.\\
    \For{${g^k_i} \in \{{g^k_0} , \ldots, {g^k_P}\}$}
	{
        Denote $I_i$ the indexes which are in ${g^k_i}$.\\
        Replace part of $G^{(w, t)}_{copy}$ at indexes $I_i$ with $\xi^{(w, t)}_i$ at indexes $I_i$ .\\
    }
\end{algorithm} 
\setlength{\textfloatsep}{1.55\baselineskip plus 0.2\baselineskip minus 
0.4\baselineskip}

\vspace{-1mm}
\subsection{Spar-All-Gather Algorithm}
\vspace{-1mm}
{The need for low latency and low bandwidth varies in different network environments. Latency cost may dominate the time consumption in some situations. For example, bandwidth cost converges to a constant value when there are plenty of workers in the cluster since the upper bound of bandwidth exists, while latency cost does not converge. Thus, in order to further improve the efficiency of \textsf{SparDL}, it is crucial to reduce the latency cost and make our \textsf{SparDL} adjustable to the focus of the two cost. Therefore, we propose the \textit{Spar-All-Gather} (SAG) algorithm for \textsf{SparDL} based on the idea of divide-and-conquer.}

Specifically, we first divide all $P$ workers into $d$ teams equally, which should satisfy $d\mid P$. Then the $\tfrac{P}{d}$ workers in each team perform the SRS algorithm in parallel. Since there are $\tfrac{P}{d}$ workers in each team, each worker divides their gradients into $\tfrac{P}{d}$ blocks during SRS. After SRS, each worker holds one block of sparse gradients. 
Subsequently, we synchronize different teams and make the workers with the same ranks of each team hold the same $L(k,d,p)=\tfrac{dk}{P}$ sparse gradients after synchronization. We propose two different Spar-All-Gather (SAG) algorithms to synchronize teams in two situations. 
At last, we use Bruck All-Gather in each team.

\vspace{0.025in}
\noindent\textbf{1) $\bm{d}$ is a power of 2.} 
For the case where the number of teams (i.e., $d$) is a power of 2, we propose the recursive-based Spar-All-Gather (R-SAG). R-SAG is based on recursive doubling~\cite{DBLP:journals/ijhpca/ThakurRG05}, as shown in Figure~\ref{fig:reduceallgather}(a), and combines top-$k$ selection after each transmission step. 
Specifically, the R-SAG's communication occurs between teams and teams, with the $t$-th transmission step when the team $X$ and a team $Y$ at a distance of $2^t$ exchange all data. For a worker in $X$, it communicates with the worker at the same position in $Y$. They exchange the only block of sparse gradients they hold. Unlike SRS, the sent block is still held after transmission. Then, each worker adds the received gradients to the held gradients. Due to index inconsistency, the summed sparse gradients are also subject to SGA problems. Thus, we sparsify the gradients by selecting the top-$L(k,d,p)$ values. 

In recursive doubling communication, the source and target in one transmission are the same for any worker. Thus, both sides of the transmission hold the same sparse gradients after summation and discard the same gradients after selection. Therefore, we collect half value of the discarded gradients as residuals on each side of the transmission.

After communicating $\log_2(d)$ times, all teams will be synchronized. Then we All-Gather sparse gradients of each worker using Bruck All-Gather in each team. 

\vspace{0.025in}
\noindent\textbf{Communication Complexity Analysis.} 
Since the size of the message in each transmission is $2L(k,d,p)$ (indexes and values), the cost of R-SAG step is 
\vspace{-1mm}
\begin{equation}
\vspace{-1mm}
T_{2}' = \log_2d\alpha + 2\tfrac{dk}{P}\log_2d\beta.
\end{equation}
We still use Bruck All-Gather. Hence, the communication cost of SRS and the final All-Gather is both 
\vspace{-1mm}
\begin{equation}
\vspace{-1mm}
T_{1}' = T_{3}' = \lceil \log_2\tfrac{P}{d} \rceil\alpha + 2k\tfrac{P-d}{P}\beta. 
\end{equation}
Therefore, the total communication cost of \textsf{SparDL} with R-SAG is:
\vspace{-1mm}
\begin{equation}
\begin{aligned}
T_{all}' = T_{1}'+T_{2}'+T_{3}' = (2\lceil \log_2\tfrac{P}{d} \rceil + \log_2d)\alpha  \\ + 2k(\tfrac{2P-2d}{P}+\tfrac{d}{P}\log_2d)\beta.
\end{aligned}
\end{equation}

\vspace{0.025in}
\noindent\textbf{{The impact of $d$ in R-SAG}.} 
{
Since $d$ in R-SAG is a power of 2, we analyze the change when increasing $d$ to $2d$. Then, we can find that latency cost decreases $\alpha$, but bandwidth cost increases by $2\frac{dk}{p}\log_2d\beta$.
Thus, when $d$ is set to 2, \textsf{SparDL} with R-SAG has lower latency cost than \textsf{SparDL} without SAG (i.e., $d=1$) and has the same bandwidth cost. 
Then, with the increase of $d$, the increase in bandwidth cost becomes progressively more pronounced, possibly exceeding the reduction in latency cost when $d$ is too large.
}

\vspace{0.025in}
\noindent\textbf{2) $\bm{d}$ is not a power of 2.} 
For the case where the number of teams is not a power of 2, recursive doubling cannot be used directly. Instead, the Bruck All-Gather algorithm can be utilized to synchronize gradients among different teams.
However, the communication consistency will be damaged when Bruck All-Gather combines with sparsification like Spar-Reduce-Scatter and R-SAG to solve the SGA issue.
As shown in Figure~\ref{fig:reduceallgather}(b), if we use top-$L(k,d,p)$ selection at each step after addition, then at the end of the communication, the gradient blocks held by worker 1 and worker 2 experience entirely different compression orders, e.g., block 1 is compressed with block 6 in worker 1 but with block 2 in worker 2 after step 1. Thus, the final sparse gradients are different for worker 1 and worker 2.
Such inconsistency defeats the purpose of synchronous SGD in training distributed deep learning models and will make different workers hold different deep learning models after several iterations. 
Thus, the SGA problem still exists when using the Bruck All-Gather to synchronize gradients among different teams.

\begin{figure}[tb]
	\centering
	\vspace{-1mm}
	\includegraphics[width=0.42\textwidth]{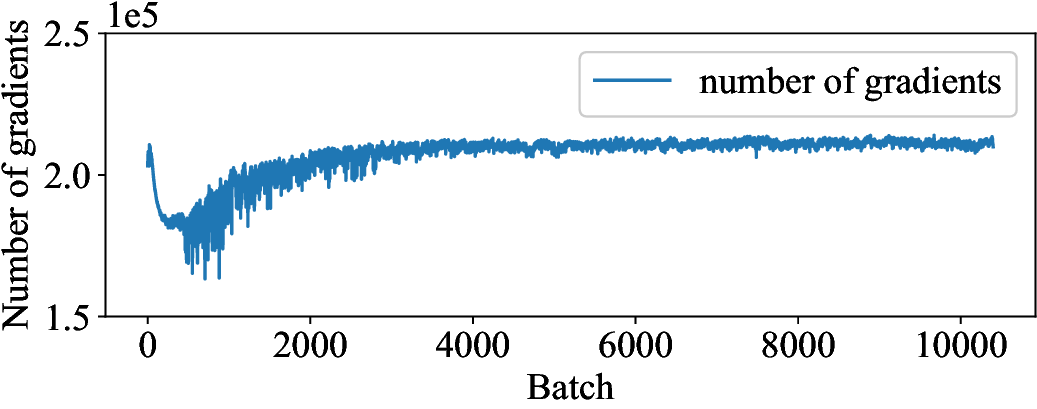}
	\vspace{-4mm}
	\caption{Number of sparse gradients after using Bruck All-Gather to synchronize gradients among teams during 20 epochs}
	\vspace{-5mm}
	\label{fig:bsag}
\end{figure}

\vspace{0.025in}
\noindent\textbf{Observation.} 
Previous study~\cite{DBLP:conf/ppopp/0002H22} has observed that the indexes distribution of the selected sparse gradients from each worker changes slowly with regard to iterations. Besides, the number of sparse gradients, after using Bruck All-Gather to synchronize gradients among teams, also changes slowly, as shown in Figure~\ref{fig:bsag}.
Thus, if we make an additional top-$h$ selection with a suitable number $h$ before Bruck All-Gather communication and change $h$ slowly in the subsequent communication rounds, we can make the number of gradients after Bruck All-Gather near $L(k,d,p)$. In this way, using additional top-$h$ selection can reduce the bandwidth cost with no excessive gradients lost compared to using selection during communication. Therefore, we propose the Bruck-based sparse All-Gather algorithm (B-SAG).

\vspace{0.025in}
\noindent\textbf{Solution.} 
First, we specify the range of $[\tfrac{k}{P},\tfrac{dk}{P}]$ for $h$ when the number of gradients after B-SAG is equal to $L(k,d,p)$. The lower and upper bounds are taken as entirely non-overlapping and entirely overlapping gradient indexes between different workers, respectively. Second, through the Figrue~\ref{fig:bsag}, we can find that the number of gradients after Bruck All-Gather is stable within successive iterations. And there are some fluctuations throughout the training process. 
Thus, we need to obtain a suitable value of $h$ as soon as possible to make the number of gradients after B-SAG close to $L(k,d,p)$, and then adjust $h$ according to the fluctuation.
Therefore, we propose a compression ratio $h$ adjustment algorithm for B-SAG, which is motivated by the CWnd algorithm~\cite{DBLP:journals/ccr/DukkipatiRCCHAJS10}.

As shown in Algorithm~\ref{alg:nspardl},
we adjust $h$ by a step size with direction (i.e., sign). The initial $h$ is $\tfrac{k}{P}$, and the initial step size is $0.01\times k\tfrac{d-1}{P}$ with positive direction. After each iteration, we set $h=h+step$ and adjust the step according to the number of gradients amount after Bruck All-Gather in this iteration~(line 12). If the gradient amount is greater than $L(k,d,p)$, we set the direction as positive, and if the gradient amount is not greater than $L(k,d,p)$, we set the direction as negative. In addition, if two consecutive step adjustments are in the same direction, we double the step~(lines 3-8), and if the direction is changed, we divide the step by 2~(lines 9-11). Through the step adjustments, we can change $h$ to make the number of gradients recover to near $L(k,d,p)$ as soon as possible and adjust $h$ slowly when the number changes slowly.

After completing the sparse Bruck All-Gather, 
we sparsified the number of sparse gradients to $L(k,d,p)$. 
After B-SAG, the amount of gradients held by each worker may be greater than $L(k,d,p)$, so we perform sparsification on each worker. Since each team has the same gradients, each worker collects $\tfrac{1}{d}$ value of the discarded gradients as residuals. 
Besides, since different workers in the same team still have blocks of gradients with different positions, we All-Gather these blocks by Bruck All-Gather as we do in \textsf{SparDL} with R-SAG.

\setlength{\textfloatsep}{0pt}
\begin{algorithm}[t]
    \small
    \LinesNumbered
    \caption{Compression Ratio Adjustment Algorithm for B-SAG}  
    \label{alg:nspardl}
    \KwIn{The number of workers $P$, the number of teams $d$, the number of gradients after B-SAG $N_t$ at iteration $t$, the selecting number $k$}
    Initial $h \leftarrow \tfrac{k}{P}$, the step size $step \leftarrow 0.01\times k\tfrac{d-1}{P}$, $flag \leftarrow False$, and $L(k,d,p) \leftarrow \tfrac{dk}{P}$.\\
    \For{$t \leftarrow 1$ to $T$}{
        \eIf{$N_t > L(k,d,p) \oplus step > 0$}{
            \eIf{$flag$}{
                $step \leftarrow step \times 2$.\\
                $flag \leftarrow False$.\\
            }{
                $flag \leftarrow True$.\\
            }
        }{
                $step \leftarrow -step \times \tfrac{1}{2}$.\\
            $flag \leftarrow False$.\\
        }
        $h \leftarrow h+step$.\\
        Proceed B-SAG with top-$h$ sparsification.\\
        Count $N_{t}$ as the number of gradients after B-SAG.\\
    }
\end{algorithm}
\setlength{\textfloatsep}{1.55\baselineskip plus 0.2\baselineskip minus 
0.4\baselineskip}

\vspace{0.025in}
\noindent\textbf{Communication Complexity Analysis.} 
The size of gradients sending before B-SAG is $2h\in[2\tfrac{k}{P},2\tfrac{dk}{P}]$. Thus, the bandwidth overhead of B-SAG is $[2k\tfrac{d-1}{P}\beta, 2k\tfrac{d^2-d}{P}\beta]$. The communication cost of B-SAG is:
\vspace{-2mm}
\begin{equation}
\vspace{-2mm}
\lceil \log_2(d)\rceil\alpha  + 2k\tfrac{d-1}{P}\beta \leq T_{2}' \leq \lceil \log_2(d)\rceil\alpha  + 2k\tfrac{d^2-d}{P}\beta.
\end{equation}
The size of gradients after B-SAG is $[2\tfrac{k}{P},2\tfrac{dk}{P}]$, the communication cost of final All-Gather is 
\vspace{-2mm}
\begin{equation}
\vspace{-2mm}
\lceil \log_2\tfrac{P}{d} \rceil\alpha + 2k\tfrac{P-d}{Pd}\beta \leq T_{3}' \leq \lceil \log_2\tfrac{P}{d} \rceil\alpha + 2k\tfrac{P-d}{P}\beta. 
\end{equation}
Therefore, the total communication overhead of \textsf{SparDL} with B-SAG is:
\vspace{-2mm}
\begin{equation}
\begin{aligned}
(2\left\lceil \log_2\tfrac{P}{d} \right\rceil+\lceil\log_2 d\rceil)\alpha + 2k\tfrac{d^2+P-2d}{Pd}\beta \\ \leq T_{all}' = T_{1}'+T_{2}'+T_{3}' \\
\leq (2\left\lceil \log_2\tfrac{P}{d} \right\rceil+\lceil\log_2 d\rceil)\alpha + 2k\tfrac{d^2+2P-3d}{P}\beta.
\end{aligned}
\end{equation}


\vspace{0.025in}
\noindent\textbf{{The impact of $d$ in B-SAG}.} 
{
$d$ of B-SAG may not be a power of 2, but, like R-SAG, every time $d$ is increased to $2d$, latency cost will be reduced by $\alpha$.
Different from R-SAG, the bandwidth cost of B-SAG has a value range, so we consider it from the lower bound and the upper bound, respectively. The lower bound of bandwidth cost decreases with the increase of $d$ when $d < \sqrt{p}$, but increases when $d > \sqrt{p}$; the upper bound of bandwidth cost increases when $d > 1.5$, and the upper bound of bandwidth cost when $d=2$ is the same as the bandwidth cost of \textsf{SparDL} without SAG (i.e., $d=1$). 
Thus, an appropriate $d$ reduces the communication time of \textsf{SparDL} with B-SAG, but a too large $d$ may cause an increase in the communication time since the increase in bandwidth cost exceeds the decrease in latency cost.
}

\vspace{0.025in}
\noindent\textbf{{The selection of the optimal $d$ for SAG algorithm.}}
{For each network condition and model size, there is an optimal number of groups $d$ to make R-SAG or B-SAG get the fastest training efficiency. Through the experiment (in Section~\ref{sec: adjustd}), we find that the time consumption of each epoch is relatively stable. Thus, the $d$ with the least time consumption in the first epoch is likely to be the optimal $d$. 
Besides, since $d$ should satisfy $d\mid P$, there are only a few available values for $d$.
Therefore, to select the optimal number of groups $d$, we suggest that users run one epoch for each $d$, and choose $d$ with the least time consumption as the optimal $d$.}

\noindent\textbf{{Computation and Space Complexity Analysis for \textsf{SparDL}.}} 
{
Let $P$ be the number of workers, $n$ be the number of dense gradients, $k$ be the number of sparse gradients ($k \ll n$), $d$ be the number of teams in \textsf{SparDL} with SAG. 
\textsf{SparDL} first partitions dense gradients into $\frac{P}{d}$ blocks and $l = \lceil \log_2\frac{P}{d} \rceil$ sending bags, which takes $O(\frac{P}{d})$ time and $O(1)$ space. 
Then, \textsf{SparDL} directly restores all gradients for the global residual collection algorithm and takes $O(n)$ time and $O(n)$ space. 
Next, \textsf{SparDL} transmits gradients $l$ times, which takes $O(\frac{P}{d}\frac{kd}{P})=O(k)$ time and $O(n)$ space for gradients summation and receiving buffer, and sparsifies $\frac{P}{d}$ blocks of gradients to select $\frac{k}{\frac{P}{d}}=\frac{kd}{P}$ gradients each block, which takes $O(\frac{P}{d}\frac{nd}{P})=O(n)$ time and $O(\log \frac{nd}{P})$ space to select top-$\frac{kd}{P}$ gradients using Quicksort-based algorithms~\cite{DBLP:conf/icpp/XueLZGX16, DBLP:conf/iclr/LinHM0D18}, in the Spar-Reduce-Scatter step.}

{
Then, if $d>1$, we use R-SAG or B-SAG. Otherwise, we skip to the next step.
For R-SAG, \textsf{SparDL} transmits and sparsifies gradients $\log_2 d$ times, which takes $O(\frac{kd}{P}\log d)$ time and $O(\log \frac{kd}{P})$ space for sparsification and $O(\frac{kd}{P}\log d)$ time and $O(\frac{nd}{P})$ space for transmission.}
{For B-SAG, \textsf{SparDL} sparsifies gradients twice and transmits gradients $\lceil \log_2 d \rceil$ times. This step takes $O(\frac{kd}{P})$ time and $O(\log \frac{kd}{P})$ space for sparsification and $O(\frac{kd^2}{P})$ time and $O(n)$ space for transmission.
}

{
After SAG step, \textsf{SparDL} All-Gathers all blocks and takes $O(1)$ time and $O(n)$ space. 
At last, the global residual collection algorithm takes $O(n)$ time and $O(1)$ space at most.
Overall, \textsf{SparDL} (R-SAG) takes $O(n+\frac{kd}{P}\log d)$ computation complexity; \textsf{SparDL} (B-SAG) takes $O(n+\frac{kd^2}{P})$ computation complexity; \textsf{SparDL} without SAG takes $O(n)$ computation complexity. They all take $O(n)$ space complexity.
}
\section{Experiments}
\label{sec:exe}
\subsection{Experimental Setup}
\vspace{-1mm}
In this section, we evaluate the effectiveness and efficiency of the proposed \textsf{SparDL} using multiple practical deep learning models and datasets, compared with four baselines.

\begin{table}[t]
\caption{Deep learning cases used for evaluation}
\small
\vspace{-3mm}
\centering
\begin{tabular}{c|c|l|l}
\hline
Case  & Task Type            & Models    & Dataset   \\ \hline
Case 1 & Type 1 & VGG-16~\cite{DBLP:journals/corr/SimonyanZ14a}    & CIFAR-10~\cite{krizhevsky2009learning}  \\
Case 2 & Type 1 & VGG-19    & CIFAR-100 \\
Case 3 & Type 1 & ResNet-50~\cite{DBLP:conf/cvpr/HeZRS16} & ImageNet~\cite{DBLP:journals/ijcv/RussakovskyDSKS15}  \\
Case 4 & Type 2     & VGG-11    & House~\cite{DBLP:conf/ijcci/AhmedM16}     \\
Case 5 & Type 3  & LSTM-IMDB   & IMDB~\cite{maas-EtAl:2011:ACL-HLT2011}      \\
Case 6 & Type 4  & LSTM-PTB~\cite{hochreiter1997long}  & PTB~\cite{DBLP:journals/coling/MarcusSM94}       \\
Case 7 & Type 5  & BERT~\cite{DBLP:conf/naacl/DevlinCLT19}      & Wikipedia~\cite{DBLP:conf/naacl/DevlinCLT19}      \\ \hline
\end{tabular}
\label{tab:models}
\vspace{-2mm}
\end{table}

\vspace{0.025in}
\noindent\textbf{Datasets and models.} 
To comprehensively evaluate the efficacy of the proposed \textsf{SparDL} in diverse scenarios, we leverage five distinct types of tasks: image classification (Type 1), image regression (Type 2), text classification (Type 3), language modeling (Type 4), and language processing (Type 5). We use seven different real-world datasets and seven widely-utilized deep learning models.
Specfically, these models contain 14.7M, 20.1M, 23.5M, 9.2M, 35.2M, 66M, 133.5M parameters, respectively. LSTM-IMDB and LSTM-PTB are 2-layer RNN models with LSTM units, and LSTM-PTB is similar as in \cite{DBLP:conf/iclr/LinHM0D18} and~\cite{DBLP:conf/icdcs/ShiWZTWHC19}. 
The deep learning models and datasets are summarized in Table~\ref{tab:models}.

\vspace{0.025in}
\noindent\textbf{Competitors.} 
We compare the proposed \textsf{SparDL} with four existing sparse All-Reduce methods: gTop$k$~\cite{DBLP:conf/icdcs/ShiWZTWHC19, DBLP:conf/ijcai/ShiZWTC19}, O$k$-Top$k$~\cite{DBLP:conf/ppopp/0002H22}, Top$k$A~\cite{DBLP:conf/sc/RenggliAAAH19}, and Top$k$DSA~\cite{DBLP:conf/sc/RenggliAAAH19}. 

\vspace{0.025in}
\noindent\textbf{Experimental setting.} All experiments in this paper are implemented in PyTorch 1.11.0~\cite{DBLP:conf/nips/PaszkeGMLBCKLGA19} with mpi4py 3.0.3.
All experiments are evaluated in a GPU cluster with 14 GPU machines. Each GPU machine is installed with CentOS-7.9, Nvidia driver 510.39.01, and CUDA 11.6 and has 256 GB RAM, two Intel(R) Xeon(R) 4314 CPU @ 2.40GHz processors, one GeForce RTX A40 GPU.
All machines are connected to an Ethernet with default setting.

\begin{figure}[t]
\centering
\hspace{-3mm}
\includegraphics[width=0.33\textwidth]{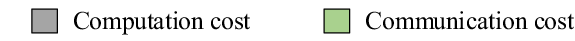}

\centering
\vspace{-2mm}
\hspace{-3mm}
\subfigure[VGG-19 on CIFAR100]{
\includegraphics[width=0.21\textwidth]{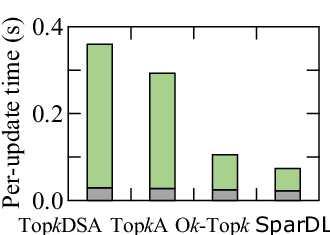}
}
\hspace{1.4mm}
\hspace{-0.5cm}
\subfigure[VGG-11 on House]{
\includegraphics[width=0.21\textwidth]{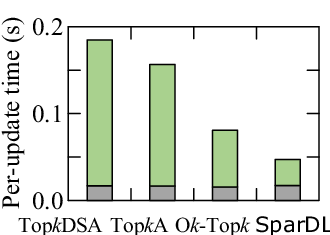}
}
\vspace{-3mm}

\centering
\hspace{-3mm}
\subfigure[LSTM-IMDB on IMDB]{
\includegraphics[width=0.21\textwidth]{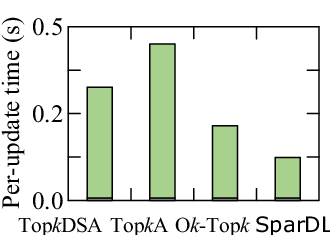}
}
\hspace{1.4mm}
\hspace{-0.5cm}
\subfigure[LSTM-PTB on PTB]{
\includegraphics[width=0.21\textwidth]{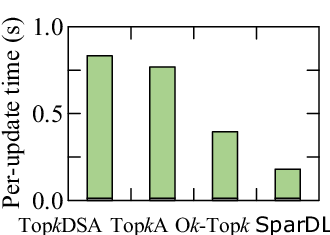}
}
\vspace{-2mm}

\caption{Per-update time with 14 workers}
\vspace{-6mm}
\label{fig:iteration_time}
\end{figure}

\vspace{-2mm}
\subsection{Performance in Four Deep Learning Cases}
\vspace{-1mm}
To verify the superiority of our proposed \textsf{SparDL} framework on communication cost, we compare it with O$k$-Top$k$~\cite{DBLP:conf/ppopp/0002H22}, Top$k$A~\cite{DBLP:conf/sc/RenggliAAAH19} and Top$k$DSA~\cite{DBLP:conf/sc/RenggliAAAH19} on four distinct distributed deep learning cases where 14 workers involved. These four cases are corresponding to image classification, image regression, text classification, and language modeling, respectively. The experimental results are illustrated in Fig.\ref{fig:iteration_time}. From this figure, we can observe that \textsf{SparDL} has the lowest communication cost in all four cases. Specifically, Fig.\ref{fig:iteration_time}(a) shows that, when training VGG-19 on CIFAR-100, \textsf{SparDL} exhibits $6.4\times$ faster than Top$k$DSA, $5.1\times$ faster than Top$k$A, and $1.6\times$ faster than O$k$-Top$k$ for communication cost individually. Similarly, in Fig.~\ref{fig:iteration_time}(b), while training VGG-11 on House, \textsf{SparDL} is the fastest and achieves $5.6\times, 4.7\times, 2.2\times$ speedup over the state-of-the-art methods in communication cost. From Fig.~\ref{fig:iteration_time}(c) and Fig.~\ref{fig:iteration_time}(d), \textsf{SparDL} is $2.7\times, 3.8\times, 1.8\times$ and $5.0\times, 4.5\times, 2.3\times$ faster than the baselines on LSTM-IMDB and LSTM-PTB, respectively.

Among the compared methods, Top$k$DSA is the slowest due to its incomplete resolution of the Sparse Gradient Accumulation (SGA) dilemma, leading to increasing transmission volume after each step in one iteration. Besides, the performance of Top$k$A is also inferior, for it solves the SGA dilemma only using the All-Gather operation, incurring high bandwidth cost even with the fast recursive doubling algorithm. Additionally, \textsf{SparDL} also significantly outperforms O$k$-Top$k$ in communication cost, since O$k$-Top$k$ introduces many additional transmission operations to balance gradients among workers for solving the SGA dilemma, resulting in high latency cost and large upper bound of bandwidth cost. Moreover, O$k$-Top$k$ employs threshold pruning instead of top-$k$ selection, which leads to the actual time consumption of O$k$-Top$k$ exceeding its theoretical time consumption. In contrast, \textsf{SparDL}, utilizing top-$k$ pruning, avoids this issue entirely. 
We can also observe that the communication cost of training VGG-11 is lower than that of VGG-19 for each method. The reason is that VGG-19 has more parameters than VGG-11, and more parameters cause more bandwidth cost, subsequently increasing the communication time. For the same reason, the communication cost of training LSTM-PTB is higher than LSTM-IMDB. As a result, \textsf{SparDL} consistently achieves faster iteration speeds than Top$k$DSA, Top$k$A, and O$k$-Top$k$ across various task types, models, and datasets.

\begin{figure}[t]
\centering
\hspace{-3mm}
\includegraphics[width=0.5\textwidth]{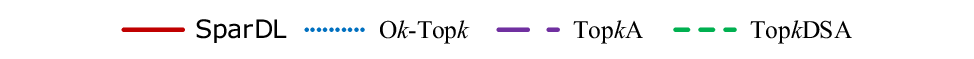}
\vspace{-5mm}

\centering
\hspace{-3mm}
\subfigure[VGG-19 on CIFAR-100]{
\includegraphics[width=0.21\textwidth]{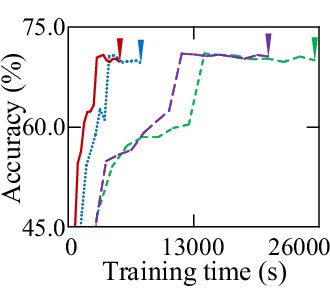}
}
\hspace{1.4mm}
\hspace{-0.5cm}
\subfigure[VGG-11 on House]{
\includegraphics[width=0.21\textwidth]{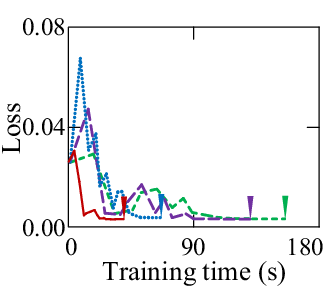}
}
\vspace{-3mm}

\centering
\hspace{-3mm}
\subfigure[LSTM-IMDB on IMDB]{
\includegraphics[width=0.21\textwidth]{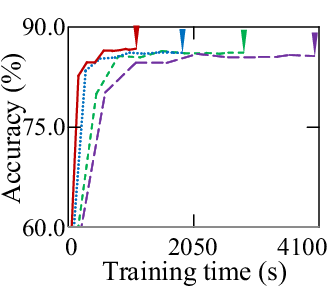}
}
\hspace{1.4mm}
\hspace{-0.5cm}
\subfigure[LSTM-PTB on PTB]{
\includegraphics[width=0.21\textwidth]{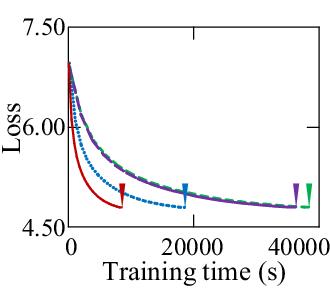}
}
\vspace{-2mm}

\caption{Training different distributed deep learning cases with 14 workers}
\vspace{-6mm}
\label{fig:convergence}
\end{figure}

\vspace{-2mm}
\subsection{Convergence in Four Deep Learning Cases}
\vspace{-1mm}
To demonstrate \textsf{SparDL}'s capability in reducing the overall convergence time, we record the test accuracy (or test loss) of \textsf{SparDL} w.r.t. the training time and compare it with that for the baselines, illustrated in Fig.~\ref{fig:convergence}. 
From the experimental results, it is evident that \textsf{SparDL} outperforms all the baseline methods, exhibiting the shortest training completion time while converging to similar accuracy (or loss) as the baseline methods. 
Specifically, as shown in Fig.~\ref{fig:convergence}, \textsf{SparDL} achieves $4.9\times, 4.0\times, 1.4\times$ faster than Top$k$A, Top$k$DSA and O$k$-Top$k$ for VGG-19, respectively. In the case of VGG-11, the speedup is $3.9\times, 3.3\times, 1.7\times$. As for LSTM-IMDB, the speedup is $2.6\times, 3.6\times, 1.7\times$ over the baselines. And for LSTM-PTB, \textsf{SparDL} is $4.6\times, 4.3\times$ and $2.2\times$ faster, respectively. Moreover, from Fig.~\ref{fig:convergence}(a) to Fig.~\ref{fig:convergence}(d), the models trained with the four sparse All-Reduce methods all converge to similar accuracy (or loss) after an equivalent number of epochs. 

\textsf{SparDL} expedites the training process by accelerating the communication process within each iteration. Since the computation cost is stable using different communication methods, the acceleration impact of \textsf{SparDL} on per-update time is marginally inferior to its impact on communication cost. Nonetheless, \textsf{SparDL} still achieves a considerable acceleration in the training process. This is attributed to its efficient resolution of the SGA dilemma without requiring additional transmission. It accomplishes this by partitioning gradients into blocks and maintaining the block size to ensure optimal efficiency, which substantially improves communication speed. Besides, \textsf{SparDL} can achieve comparable accuracy or loss as the baselines after the same number of epochs. This is credited to the proposed global residual collection algorithm employed by \textsf{SparDL}, enabling it to collect all gradients pruned in each top-$k$ selection for re-utilization and thereby maintaining \textsf{SparDL}'s convergence rate. In summary, \textsf{SparDL} has comparable convergence rate as Top$k$A, Top$k$DSA and O$k$-Top$k$ while accelerating training across different tasks with different model types and data types.

\begin{figure}[t]
\centering
\hspace{-3mm}
\includegraphics[width=0.33\textwidth]{fig-icde/ICDE-E01-0.eps}

\centering
\vspace{-2mm}
\hspace{-3mm}
\subfigure[ResNet-50 on ImageNet]{
\includegraphics[width=0.21\textwidth]{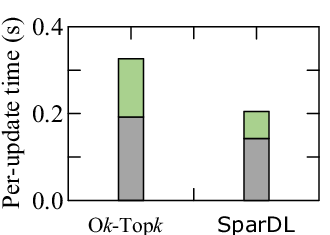}
}
\hspace{1.4mm}
\hspace{-0.5cm}
\subfigure[BERT on Wikipedia]{
\includegraphics[width=0.21\textwidth]{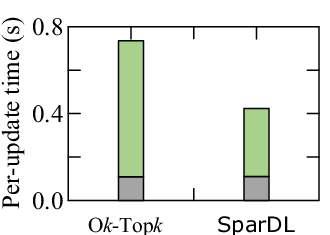}
}
\vspace{-2mm}
\caption{Per-update time on ResNet-50 and BERT with 14 workers}
\vspace{-4mm}
\label{fig:berttime}
\end{figure}

\begin{figure}[t]
\centering
\hspace{-3mm}
\includegraphics[width=0.5\textwidth]{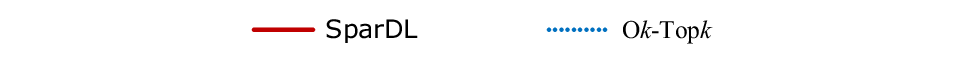}
\vspace{-5mm}

\centering
\vspace{-3mm}
\hspace{-3mm}
\subfigure[ResNet-50 on ImageNet]{
\includegraphics[width=0.21\textwidth]{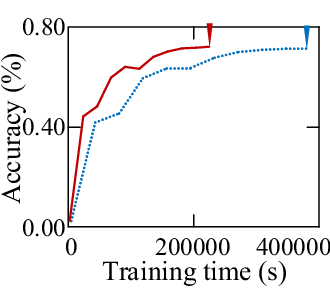}
}
\hspace{1.4mm}
\hspace{-0.5cm}
\subfigure[BERT on Wikipedia]{
\includegraphics[width=0.21\textwidth]{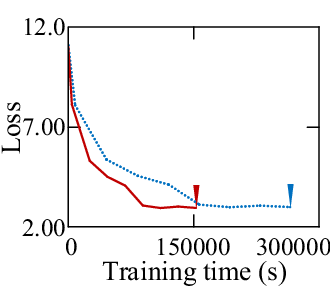}
}
\vspace{-2mm}

\caption{Convergence on ResNet-50 and BERT with 14 workers}
\label{fig:bertloss}
\vspace{-6mm}
\end{figure}


\vspace{-2mm}
\subsection{Comparison on large datasets: ImageNet and Wikipedia with ResNet-50 and BERT}
To further validate the efficiency of \textsf{SparDL}, this paper further evaluates \textsf{SparDL} with the large datasets  (i.e., ImageNet and Wikipedia) and large models (i.e., ResNet-50 and BERT). In these experiments, we compare \textsf{SparDL} with O$k$-Top$k$, which is the most efficient method among the baselines. The results are illustrated in Fig.~\ref{fig:berttime}, which verifies the superiority of our proposed \textsf{SparDL}. Fig.~\ref{fig:berttime}(a) and Fig.~\ref{fig:berttime}(b) depict the per-update time. For ResNet-50, we can observe that \textsf{SparDL} achieves $2.3\times$ acceleration of communication cost compared to O$k$-Top$k$. For BERT, \textsf{SparDL} is $2.0\times$ faster than O$k$-Top$k$ in terms of communication cost. 
Fig.~\ref{fig:bertloss}(a) and Fig.~\ref{fig:bertloss}(b) plot the test accuracy and training loss w.r.t. the training time of ResNet-50 and BERT, respectively. These figures show that \textsf{SparDL} maintains convergence rate comparable to O$k$-Top$k$ while completing the training process obviously faster than O$k$-Top$k$ on large datasets. 
Besides, \textsf{SparDL} achieves $1.7\times$ speedup over O$k$-Top$k$ for ResNet-50, and a similar speedup of $1.7\times$ for BERT. Therefore, \textsf{SparDL} demonstrates efficiency and effectiveness 
on large datasets with different task types.

\begin{figure}[t]
\centering
\hspace{-3mm}
\includegraphics[width=0.5\textwidth]{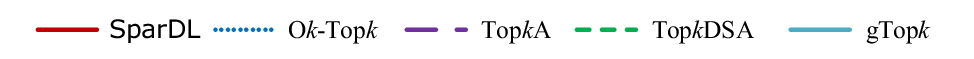}
\vspace{-5mm}

\centering
\vspace{-2mm}
\hspace{-3mm}
\subfigure[Speedup w.r.t. number of workers]{
\includegraphics[width=0.22\textwidth]{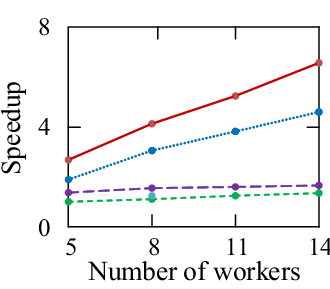}
}
\hspace{1.4mm}
\subfigure[Case 2 with 8 workers]{
\includegraphics[width=0.22\textwidth]{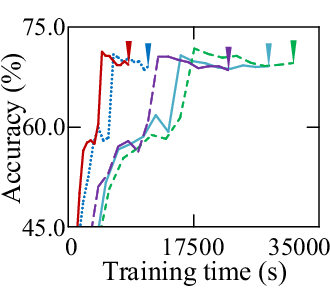}
}

\caption{Performance and convergence with different number of workers}
\vspace{-6mm}
\label{fig:scalability}
\end{figure}

\vspace{-2mm}
\subsection{Scalability}
\label{Scalability}
\vspace{-1mm}
In this set of experiments, we evaluate the scalability of \textsf{SparDL} with evaluation metrics delineated in literature~\cite{DBLP:conf/icde/ZhouLLOWY21} and compare it with that for the baselines. The experimental results are recorded in Fig.~\ref{fig:scalability}, which demonstrates that \textsf{SparDL} has the highest scalability. Specifically, we set the average training time required to complete a single epoch of VGG-19 on CIFAR-100, employing Top$k$DSA as the communication method within an 8-worker cluster, as the reference time. Subsequently, we compute the speedup achieved by gTop$k$, Top$k$A, Top$k$DSA, and O$k$-Top$k$ with varying numbers of workers, relative to this reference time. We only evaluate gTop$k$ with 8 workers since it only works in clusters with power-of-two workers. As shown in Fig.~\ref{fig:scalability}(a), \textsf{SparDL} exhibits superior scalability compared to other sparse All-Reduce methods. Besides, Fig.~\ref{fig:scalability}(b) depicts the test accuracy w.r.t. training time of VGG-19 using gTop$k$, Top$k$A, Top$k$DSA, O$k$-Top$k$ and \textsf{SparDL} with 8 workers. It is evident that \textsf{SparDL} surpasses others in terms of speed.

The superior scalability of \textsf{SparDL} can be attributed to \textsf{SparDL}'s lower communication complexity relative to the other approaches. As the number of workers $P$ increases, the speed gap between \textsf{SparDL} and the other methods widens. 
Besides, from Fig.~\ref{fig:scalability}(b), we can observe that the acceleration margin of \textsf{SparDL} over other methods with 8 workers is less pronounced than when trained with 14 workers, given \textsf{SparDL}'s enhanced speedup with more workers. 
Additionally, gTop$k$ is slower than \textsf{SparDL}, mainly attributable to its inefficient synchronization of gradients via reduction tree and broadcast tree models, resulting in high bandwidth cost.


\begin{figure}[t]
\centering
\vspace{-2mm}
\hspace{-3mm}
\subfigure[\textsf{SparDL} with R-SAG]{
\includegraphics[width=0.21\textwidth]{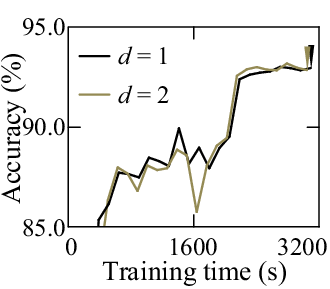}
}
\hspace{1.4mm}
\hspace{-0.5cm}
\subfigure[\textsf{SparDL} with B-SAG]{
\includegraphics[width=0.21\textwidth]{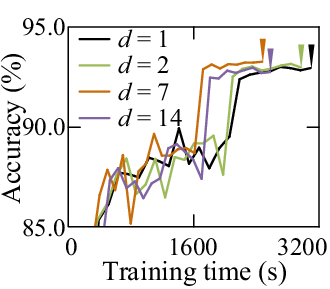}
}
\vspace{-2mm}
\caption{\textsf{SparDL} with different SAG algorithms in the cluster of 14 workers}
\vspace{-2mm}
\label{fig:sag}
\end{figure}

\begin{figure}[t]
\centering
\vspace{-2mm}
\hspace{-3mm}
\subfigure[Imapact of $d$ with 14 workers]{
\includegraphics[width=0.21\textwidth]{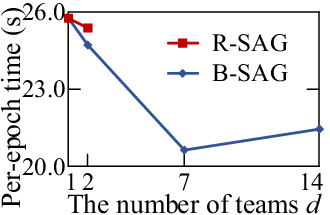}
}
\hspace{1.4mm}
\hspace{-0.5cm}
\subfigure[Imapact of $d$ with 12 workers]{
\includegraphics[width=0.21\textwidth]{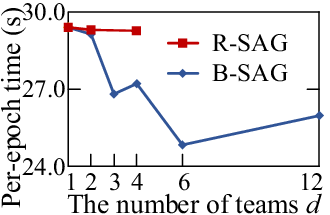}
}
\vspace{-2mm}
\caption{{Impact of $d$ with different number of workers}}
\vspace{-6mm}
\label{fig:sag-new}
\end{figure}

\vspace{-2mm}
\subsection{Impact of Spar-All-Gather algorithm.}
\vspace{-1mm}
In this set of experiments, we evaluate the impact of the Spar-All-Gather (SAG) algorithm on \textsf{SparDL}. The SAG algorithm is proposed to reduce the latency cost and further improve efficiency. And there are two SAG algorithms, i.e., R-SAG and B-SAG. The experimental results are depicted in Fig.~\ref{fig:sag}
, which prove that both SAG algorithms can accelerate \textsf{SparDL}.
Specifically, we set the team number, $d$, to 1 and 2 for R-SAG, and set $d$ to 1, 2, 7, and 14 for B-SAG. Afterward, the effectiveness of these algorithms is evaluated via the training of VGG-16 on CIFAR-10, utilizing \textsf{SparDL} with either R-SAG or B-SAG as the communication method. 
As shown in Fig.~\ref{fig:sag}(a), \textsf{SparDL} with R-SAG ($d=2$) complete the training process slightly faster compared with \textsf{SparDL} without SAG, i.e., \textsf{SparDL} with R-SAG or B-SAG ($d=1$). Besides, from Fig.~\ref{fig:sag}(b), all \textsf{SparDL} frameworks with B-SAG ($d>1$) are faster than \textsf{SparDL} without SAG, signifying that B-SAG accelerates \textsf{SparDL}. 
In particular, B-SAG ($d=7$) and B-SAG ($d=14$) can significantly improve the speed by $ 1.25\times$ and $1.2\times$, respectively. 
Additionally, the experimental results also show that \textsf{SparDL} frameworks with SAG have similar convergence rate as \textsf{SparDL} without SAG.

{
Next, we observe the influence of different $d$ on the efficiency of SAG with different numbers of workers. 
For 14 workers in Fig.~\ref{fig:sag-new}(a), the reason why R-SAG has this effect on \textsf{SparDL} is that R-SAG with 2 teams marginally decreases the latency cost of \textsf{SparDL} while keeping the bandwidth cost unchanged, thereby causing a slight reduction in training time. Besides, B-SAG greatly improves the speed since it reduces not only the latency cost but also the bandwidth cost of \textsf{SparDL}.
We can also observe that B-SAG ($d=14$) is slower than B-SAG ($d=7$). This is because the upper bound of bandwidth increases with the increase of $d$, and the lower bound of bandwidth also increases with the increase of $d$ when $d>\sqrt{p}$. Thus, a large $d$ eventually weakens the effect of improving the speed compared with the best $d$ ($d=7$).
For 12 workers in Fig.~\ref{fig:sag-new}(b), we can observe that the efficiency improvement of R-SAG from 2 to 4 is less than that from 1 to 2. The reason is that R-SAG ($d=2$) decreases the latency cost of \textsf{SparDL} while keeping the bandwidth cost unchanged, but the R-SAG ($d=4$) reduces the same latency cost while increasing the bandwidth cost. As for B-SAG, B-SAG ($d=4$) is slower than B-SAG ($d=3$). It is because B-SAG ($d=4$) has the same latency cost and the same lower bound of bandwidth cost as B-SAG ($d=3$), and the upper bound of bandwidth cost of B-SAG ($d=4$) is higher.}
In addition, \textsf{SparDL} with B-SAG ($d=14$) attains the lowest test accuracy, as shown in Fig.~\ref{fig:sag}(b).
This arises when $d$ equals the number of workers (i.e., $P$) in the cluster, creating a scenario where all workers aggregate after just one local top-$h$, without consideration of global gradients at all. This leads to the discarding of many globally significant gradients but locally less important, thus impacting the convergence rate.

\begin{figure}[t]
\centering
\hspace{-3mm}
\subfigure[Training time with 14 workers]{
\includegraphics[width=0.21\textwidth]{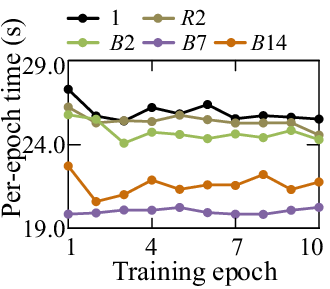}
}
\hspace{1.4mm}
\hspace{-0.5cm}
\subfigure[Training time with 12 workers]{
\includegraphics[width=0.21\textwidth]{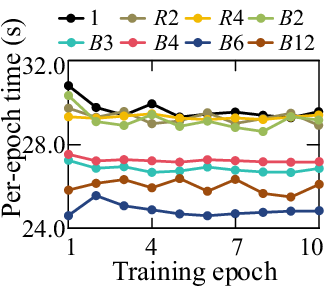}
}
\vspace{-2mm}
\caption{{Training time among different epochs}}
\vspace{-6mm}
\label{fig:adjustd}
\end{figure}

\vspace{-2mm}
\subsection{{Time-consuming stability of \textsf{SparDL} in different training epochs with different team number $d$.}}
\vspace{-1mm}
\label{sec: adjustd}
{
In this set of experiments, we evaluate the time-consuming stability of \textsf{SparDL} in different epochs with different team numbers $d$. The experimental results are shown in Fig.~\ref{fig:adjustd}. 
$Rx$, $Bx$ and 1 in the figure mean R-SAG with $d=x$, B-SAG with $d=x$ and \textsf{SparDL} without SAG, respectively.
From the experimental results, it is evident that the \textsf{SparDL} with the optimal $d$ is steadily faster than \textsf{SparDL} with other $d$.
Specifically, B-SAG ($d$ = 7) takes the least time in each of the first ten rounds in Fig.~\ref{fig:adjustd}(a), and B-SAG ($d$ = 6) takes the least time in each of the first ten rounds in Fig.~\ref{fig:adjustd}(b). They all have the optimal $d$ in their own experimental settings.
Therefore, users can choose the optimal $d$ by comparing the training time of \textsf{SparDL} with different $d$ in the first epoch.
}

\begin{figure}[t]
\centering
\hspace{-3mm}
\subfigure[VGG-16 on CIFAR-10]{
\includegraphics[width=0.21\textwidth]{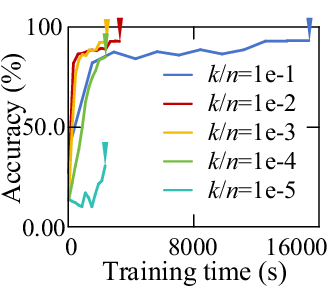}
}
\hspace{1.4mm}
\hspace{-0.5cm}
\subfigure[VGG-19 on CIFAR-100]{
\includegraphics[width=0.21\textwidth]{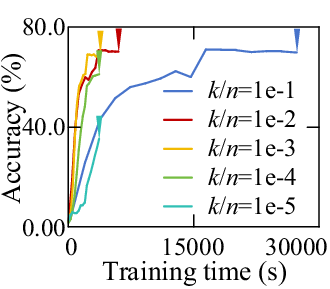}
}
\vspace{-2mm}
\caption{{\textsf{SparDL} with different $k$ for sparsification}}
\vspace{-6mm}
\label{fig:differentk}
\end{figure}

\vspace{-2mm}
\subsection{{Impact of the hyper-parameter $k$ for sparsification.}}
\vspace{-1mm}
{
In this set of experiments, we demonstrate the impact of $k$ for sparsification on the training efficiency and the convergence rate. The experimental results are illustrated in Fig.~\ref{fig:differentk}.
From this figure, we can observe that the smaller the ratio of $k$ to $n$, i.e., the number of dense gradients, the shorter the time it takes for \textsf{SparDL} to train the same number of epochs. However, the convergence rate gradually decreases. 
Specifically, for VGG-16 and VGG-19, after the $k/n$ is reduced from 1e-1 to 1e-2, the training time is greatly reduced (0.21$\times$ and 0.22$\times$), and the accuracy has no obvious change. After $k/n$ is reduced from 1e-2 to 1e-3, the training time is slightly reduced (0.75$\times$ and 0.70$\times$), and the accuracy is slightly reduced. However, after $k/n$ is reduced from 1e-3 to 1e-4 and 1e-5, the training time has hardly decreased (higher than 0.95$\times$), and the accuracy has decreased significantly, especially 1e-5. The reason why the training time is stable after $k/n$=1e-3 is that the communication time comes from latency cost and bandwidth cost. With the decrease of $k$, bandwidth cost will continue to decrease, but latency cost will remain unchanged, so the communication time remains stable after it is reduced to a certain extent. 
Therefore, the optimal $k$ should be chosen as $k/n$=1e-2 or 1e-3 when the communication time is low and maintains fast convergence rate.
}



\begin{figure}[t]
\centering
\hspace{-3mm}
\includegraphics[width=0.5\textwidth]{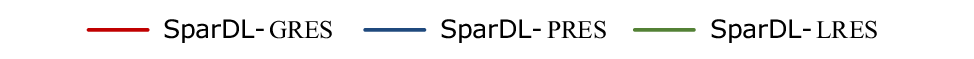}
\vspace{-5mm}

\centering
\vspace{-2mm}
\hspace{-3mm}
\subfigure[VGG-19, \textsf{SparDL}]{
\includegraphics[width=0.21\textwidth]{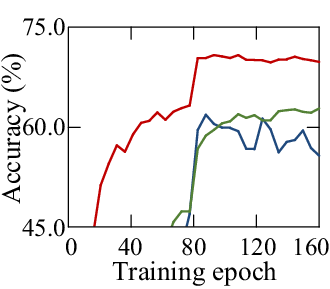}
}
\hspace{1.4mm}
\hspace{-0.5cm}
\subfigure[VGG-16, \textsf{SparDL}]{
\includegraphics[width=0.21\textwidth]{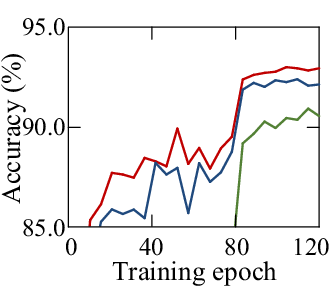}
}
\vspace{-3mm}

\centering
\hspace{-3mm}
\subfigure[VGG-16, \textsf{SparDL} (R-SAG)]{
\includegraphics[width=0.21\textwidth]{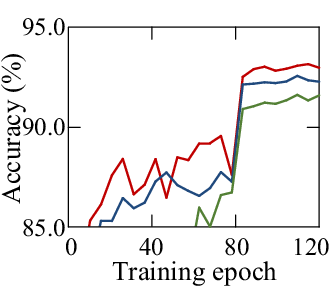}
}
\hspace{1.4mm}
\hspace{-0.5cm}
\subfigure[VGG-16, \textsf{SparDL} (B-SAG)]{
\includegraphics[width=0.21\textwidth]{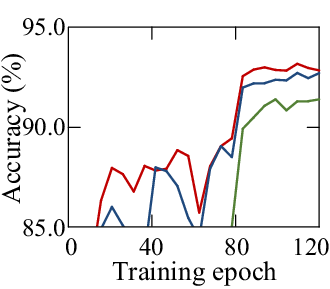}
}
\vspace{-2mm}
\caption{Convergence on \textsf{SparDL} using different residual collection algorithms with 14 workers}
\vspace{-6mm}
\label{fig:res}
\end{figure}

\vspace{-2mm}
\subsection{Impact of different residual collection algorithms.}
\vspace{-1mm}
In this set of experiments, we evaluate the effectiveness of the proposed global residual collection algorithm compared to existing residual collection algorithms. The experimental results are shown in Fig.~\ref{fig:res}.
It is observed from the results that the global residual collection algorithm maintains the fast convergence rate of \textsf{SparDL}.
Specifically, we compare the convergence rate of \textsf{SparDL} using three different residual collection algorithms: our global residual collection algorithm (\textsf{SparDL}-GRES), partial residual collection algorithm~\cite{DBLP:conf/icdcs/ShiWZTWHC19, DBLP:conf/ppopp/0002H22} (\textsf{SparDL}-PRES) and local residual collection algorithm~\cite{DBLP:conf/iclr/LinHM0D18} (\textsf{SparDL}-LRES). We utilize these communication methods
to train VGG-19 on CIFAR-100 and VGG-16 on CIFAR-10. Fig.~\ref{fig:res}(a) to Fig.~\ref{fig:res}(d) plot the test accuracy w.r.t. the training epoch under four different conditions. It is observed that \textsf{SparDL}-GRES consistently exhibits the most superior convergence rate, with the accuracy of \textsf{SparDL}-GRES consistently surpassing others after the 80th epoch when the learning rate is reduced. 

The reason for the distinction of \textsf{SparDL}-GRES is that the global residual collection algorithm accumulates all discarded gradients throughout the training process within the cluster, thereby maintaining the convergence rate. In contrast, \textsf{SparDL}-PRES and \textsf{SparDL}-LRES with partial residual collection algorithm and local residual collection algorithm overlook in-procedure residuals. However, such residuals, generated in substantial quantities within \textsf{SparDL}, play a crucial role in the training process. Neglecting to accumulate these residuals slows down the convergence rate of the training model. 
In conclusion, the global residual collection algorithm enables \textsf{SparDL} to preserve fast convergence rate when training different models with or without SAG methods.

\begin{figure}[t]
\centering
\hspace{-3mm}
\includegraphics[width=0.33\textwidth]{fig-icde/ICDE-E01-0.eps}

\centering
\vspace{-2mm}
\hspace{-3mm}
\subfigure[VGG-19 on CIFAR-100]{
\includegraphics[width=0.21\textwidth]{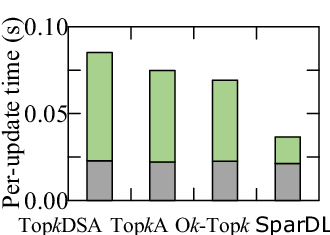}
}
\hspace{1.4mm}
\hspace{-0.5cm}
\subfigure[BERT on Wikipedia]{
\includegraphics[width=0.21\textwidth]{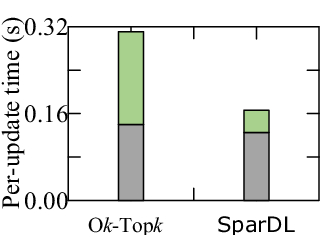}
}
\vspace{-2mm}
\caption{{Per-update time using RDMA network with 5 workers}}
\vspace{-6mm}
\label{fig:rdma}
\end{figure}

\vspace{-2mm}
\subsection{Performance with RDMA network.}
\vspace{-1mm}
{
To further evaluate the efficiency of \textsf{SparDL} with higher network bandwidth, we compared \textsf{SparDL} with baselines in a different GPU cluster on a small dataset CIFAR-100 using VGG-19 and a large dataset Wikipedia using BERT. There are five GPU machines in the cluster. Each machine is equipped with one NVIDIA A800 GPU, and all machines are connected to an InfiniBand network with RDMA service. 
The results are illustrated in Fig.~\ref{fig:rdma}, which verifies the superiority of our proposed \textsf{SparDL} even in the RDMA network. Fig.~\ref{fig:rdma}(a) and Fig.~\ref{fig:rdma}(b) depict the per-update time. For VGG-19, we can observe that \textsf{SparDL} achieves $4.0\times, 3.4\times$ and $3.0\times$ acceleration of communication cost compared to the baselines, respectively. For BERT, \textsf{SparDL} is $4.2\times$ faster than O$k$-Top$k$, i.e., the most efficient baseline, in terms of communication cost. 
Therefore, \textsf{SparDL} demonstrates efficiency when training deep learning models with high bandwidth networks.
}
\vspace{-1mm}
\section{Related Work}
\label{sec:related}
\vspace{-1mm}

All-Reduce operation~\cite{DBLP:conf/sc/SensiGA0H21, DBLP:conf/icdm/HakimiALS21, DBLP:conf/sigmod/MiaoNSYJM021, DBLP:journals/vldb/GuoZJWZCL21, DBLP:conf/hpca/DongCZYWFZLSPGJ20, DBLP:conf/icpp/ChuLASHEP17} is commonly used in data parallelism~\cite{DBLP:conf/osdi/LiAPSAJLSS14, DBLP:journals/pvldb/HuangJWCYYLGC18, DBLP:journals/pvldb/Renz-WielandGZM20, DBLP:conf/sosp/PengZCBYLWG19, DBLP:conf/sigmod/Renz-WielandGKM22, DBLP:conf/nips/ZhangCL15} distributed deep learning. 
Nonetheless, existing efficient All-Reduce methods~\cite{DBLP:conf/pvm/HoeflerGTT10, DBLP:journals/concurrency/ChanHPG07, mikami2018massively} are primarily designed for dense gradients, and are inefficient for synchronization with sparse gradients because of the sparse gradient accumulation (SGA) dilemma~\cite{DBLP:conf/ppopp/0002H22, DBLP:conf/icdcs/ShiWZTWHC19, DBLP:conf/sc/RenggliAAAH19}. 
Several sparse All-Reduce methods~\cite{DBLP:conf/icdcs/ShiWZTWHC19, DBLP:conf/ijcai/ShiZWTC19, DBLP:conf/ppopp/0002H22, DBLP:conf/sc/RenggliAAAH19} have been proposed to address the SGA dilemma. But they still suffer from low efficiency. 
Therefore, we aim to solve the SGA dilemma more efficiently to accelerate communication with sparse gradients.
\vspace{-1mm}
\section{{Discussion}}
\label{sec:lim}
\vspace{-1mm}

\noindent\textbf{{Limitations and Future Work.}}
(i) \textit{Heterogeneous environment.} \textsf{SparDL} tries to accelerate All-Reduce, which is mainly used in homogeneous environments. However, the heterogeneous environment also appears in real-world clusters, and there are some variants of All-Reduce proposed recently for this environment. In the future, we can extend SparDL to this environments.
(ii) \textit{Combining with quantization methods.} Sparsification and quantization are both common communication compression techniques. \textsf{SparDL} studies the sparsification and tries to solve the SGA dilemma efficiently. In the future, we may want to extend \textsf{SparDL} by combining quantization methods to further accelerate the communication.

\vspace{0.025in}
\noindent\textbf{{Practical Implications and Potential Applications.}}
Training deep learning models is usually time-consuming. 
Thus, it is important to use distributed deep learning (DDL) for efficient training. \textsf{SparDL} aims at accelerating the communication in DDL to further reduce the training time. And \textsf{SparDL} can be used in CV, NLP and other deep learning tasks.
In addition, a variety of large models have been proposed recently and these models are often trained by DDL. Because of the massive parameters of these models, 
it will lead to plenty of communication consumption. \textsf{SparDL} can be used in DDL and speed up the training of these large models by reducing the communication volume and efficient sparse communication.

\vspace{0.025in}
\noindent\textbf{{Relationship between \textsf{SparDL} and FSDP.}}
\textsf{SparDL} is orthogonal to FSDP~\cite{DBLP:journals/pvldb/ZhaoGVLHXWSOSDB23} or ZeRO-3~\cite{DBLP:conf/sc/RajbhandariRRH20}. The main purpose of FSDP and ZeRO-3 is to reduce the memory cost, so that larger models can be trained more easily. In these frameworks, the gradients are also synchronized by All-Reduce or Reduce-Scatter operations. Since these works communicate with dense gradients, \textsf{SparDL} is able to accelerate the communication of All-Reduce or Reduce-Scatter by using efficient sparse communication. Therefore, our \textsf{SparDL} can be combined into FSDP or ZeRO-3 frameworks.
\vspace{-1mm}
\section{Conclusions}
\label{sec:conclusion}
\vspace{-1mm}

In this paper, we analyze the low efficiency of existing sparse All-Reduce frameworks and propose \textsf{SparDL} to tackle these problems. For the first time, \textsf{SparDL} combines multiple selection processes and the Reduce-Scatter operation to deal with the Sparse Gradient Accumulation (SGA) dilemma. Besides, the \textsf{SparDL} uses the global residual collection algorithm to collect all discarded gradients in the cluster, ensuring fast training convergence. In addition, the Spar-All-Gather algorithm further improves the communication efficiency of \textsf{SparDL} and makes the ratio of latency and bandwidth cost adjustable. After conducting experiments on a wide range of common deep learning tasks, we observe that our \textsf{SparDL} achieves up to 4.9$\times$ speedup compared to the state-of-the-art methods while maintaining comparable effectiveness.

\vspace{-1mm}
\section*{Acknowledgment}
\vspace{-1mm}

This was supported  in part by the NSFC under Grants No. (62302436, U23A20296, 62025206), Ningbo Science and Technology Special Projects under Grant No. 2023Z212. Yuren Mao is the corresponding author of the work.
\balance


\balance

\bibliographystyle{abbrv}
\bibliography{ref}

\end{document}